\documentclass{article}

\usepackage[left=1in,bottom=1in,right=1in,top=1in]{geometry}
\usepackage[utf8]{inputenc} 
\usepackage{hyperref}       
\usepackage{url}            
\usepackage{booktabs}       
\usepackage{amsfonts}       
\usepackage{nicefrac}       
\usepackage{subcaption} 
\usepackage{natbib}
\usepackage{times}
\renewcommand{\paragraph}[1]{\noindent{\bf #1.}}

\usepackage{beton}
\usepackage{euler}
\usepackage[T1]{fontenc}

\usepackage{amsthm}
\usepackage{amsmath}
\usepackage{amsfonts}
\usepackage{amssymb}
\usepackage{booktabs}
\usepackage{mathtools}
\usepackage{hyperref}
\usepackage[ruled,linesnumbered]{algorithm2e}
\usepackage{xspace}
\usepackage[dvipsnames]{xcolor}
\usepackage[normalem]{ulem}
\usepackage{enumitem}
\setlist[itemize]{noitemsep, topsep=0pt, label=$\blacktriangleright$, leftmargin=*}
\usepackage{wrapfig}
\usepackage[color=Aquamarine,textsize=scriptsize]{todonotes}
\usepackage{capt-of}
\usepackage{comment}

\newcommand{\Real}{\mathbb{R}}
\newcommand{\HC}{\{0,1\}}
\newcommand{\fh}{\texttt{FlyHash}\xspace}
\newcommand{\sh}{\texttt{SimHash}\xspace}
\newcommand{\fbfc}{\texttt{FBFC}\xspace}
\newcommand{\fbf}{\texttt{FBF}\xspace}
\newcommand{\sbfc}{\texttt{SBFC}\xspace}

\newcommand{\ccone}{\texttt{CC1}\xspace}
\newcommand{\cc}{\texttt{CC}\xspace}
\newcommand{\lr}{\texttt{LR}\xspace}
\newcommand{\mlpc}{\texttt{MLPC}\xspace}
\newcommand{\knnc}{\texttt{$k$-NNC}\xspace}
\newcommand{\onennc}{\texttt{$1$-NNC}\xspace}
\newcommand{\sklearn}{\texttt{scikit-learn}\xspace}

\newcommand{\R}{{\mathbb R}}
\newcommand{\N}{{\mathbb N}}
\newcommand{\pr}{{\mathop {\rm Pr}}}
\newcommand{\E}{{{\mathbb E}}}

\newcommand{\argmin}{\mathop {\rm argmin}}

\newtheorem{theorem}{Theorem}
\newtheorem{lemma}[theorem]{Lemma}
\newtheorem{cor}[theorem]{Corollary}

\usepackage{listings}
\definecolor{codegreen}{rgb}{0,0.6,0}
\definecolor{codegray}{rgb}{0.5,0.5,0.5}
\definecolor{codepurple}{rgb}{0.58,0,0.82}
\definecolor{backcolour}{rgb}{0.95,0.95,0.92}

\lstdefinestyle{mystyle}{
    backgroundcolor=\color{backcolour},
    commentstyle=\color{codegreen},
    keywordstyle=\color{magenta},
    numberstyle=\tiny\color{codegray},
    stringstyle=\color{codepurple},
    basicstyle=\ttfamily\scriptsize,
    breakatwhitespace=false,
    breaklines=true,
    captionpos=b,
    keepspaces=true,
    numbers=left,
    numbersep=5pt,
    showspaces=false,
    showstringspaces=false,
    showtabs=false,
    tabsize=2
}

\lstset{style=mystyle}

\title{Neural Neighborhood Encoding for Classification}

\author{%
  Kaushik Sinha\\
  Wichita State University, USA\\
  \texttt{kaushik.sinha@wichita.edu}
  \and
  Parikshit Ram\\
  IBM Research, USA\\
  \texttt{p.ram@gatech.edu}
}

\begin{document}
\maketitle
\begin{abstract}
Inspired by the fruit-fly olfactory circuit, the Fly Bloom
Filter~\citep{dasgupta2018neural} is able to efficiently summarize the
data with a single pass and has been used for novelty detection. We
propose a new classifier (for binary and multi-class classification)
that effectively encodes the different local neighborhoods for each
class with a per-class Fly Bloom Filter.  The inference on test data
requires an efficient \fh~\citep{dasgupta2017neural} operation followed
by a high-dimensional, but {\em sparse}, dot product with the
per-class Bloom Filters. The learning is trivially parallelizable.  On
the theoretical side, we establish conditions under which the
prediction of our proposed classifier on any test example agrees with
the prediction of the nearest neighbor classifier with high
probability. We extensively evaluate our proposed scheme with over
$50$ data sets of varied data dimensionality to demonstrate that the
predictive performance of our proposed neuroscience inspired
classifier is competitive the the nearest-neighbor classifiers and
other single-pass classifiers.

\end{abstract}
\section{Introduction: Neurally inspired data structure} \label{sec:intro}
Neural circuits in the fruit-fly appear to assess the novelty of an
odor in a two step process. Any odor is first assigned a ``tag'' that
corresponds to a small set of Kenyon Cells (KC) that get activated by
the odor. \citet{dasgupta2017neural} interpret this tag generation
process as a hashing scheme, termed \fh, where the tag/hash is
effectively a very sparse point a high dimensional space (2000
dimensions with $\sim 95\%$ sparsity). The tag (or rather a subset of
it) serves as input to a specific mushroom body output neuron (MBON),
the MBON-$\alpha'3$, where the response of this neuron to the odor
hash encodes the novelty of an odor. \citet{dasgupta2018neural}
``interpret the KC$\to$MBON-$\alpha'3$ synapses as a Bloom Filter''
that effectively ``stores'' all odors previously exposed to the
fruit-fly. This {\em Fly Bloom Filter} (\fbf) generates continuous
valued, distance and time sensitive novelty scores that have been
empirically shown to be highly correlated to the ground-truth novelty
scores relative to other Bloom Filter-based novelty scores for both
neural activity data sets (odors and faces) and vision data sets
(MNIST and SIFT). Theoretically, bounds on the expected novelty scores
of similar and dissimilar points have been established for binary and
exponentially distributed data.

In this paper, we propose a {\bf simple} extension of \fbf to binary
and multi-class classification, where we summarize each class with its
own \fbf and utilize the {\em familiarity} scores (inverse novelty
scores) from each class to label any test point. We theoretically
study why this simple idea works, and empirically demonstrate that the
simplicity does not preclude utility.
Specifically, we present
\begin{itemize}
\item A novel \fbf based classifier (\fbfc) that can be learned in an
  {\em embarassingly parallelized} fashion with a {\em single pass} of
  the training set, {\em provide insights} into the problem structure,
  and can be inferred from with an {\em
  efficient} \fh~\citep{dasgupta2017neural} followed by a {\em sparse}
  dot-product.
\item A theoretical examination of the proposed scheme, establishing
  conditions under which {\fbfc} {\em agrees} with the nearest-neighbor
  classifier.
\item A thorough empirical comparison of \fbfc to $k$-nearest-neighbor
  (\knnc) and other standard classifiers on over $50$ data sets from
  different domains.
\item A demonstration of the scaling of the parallelized \fbfc training
  process.
\item We present how the \fbfc can be used to interpret similarities
  between different classes in classification problem.
\end{itemize}
The paper is organized as follows: We discuss related work in
Section~\ref{sec:rel-work}. We detail our proposed algorithm in
Section~\ref{sec:algo} and analyze its theoretical properties in
Section~\ref{sec:theory}. We evaluate the empirical performance of
\fbfc against baselines in Section~\ref{sec:emp-evals} and conclude
with a discussion in Section~\ref{sec:conc}.


%
\section{Related work} \label{sec:rel-work}
Neuroscience inspired techniques are now widely accepted in artificial
intelligence to great success~\citep{hassabis2017neuroscience},
especially in the field of deep learning in the form of convolutional
neural
networks~\citep{kavukcuoglu2010learning,krizhevsky2012imagenet},
dropout~\citep{hinton2012improving} and attention
mechanisms~\citep{larochelle2010learning,mnih2014recurrent} to name a
few. Much like most machine learning methods, deep learning relies on
loss-gradient based training in most cases. In contrast, our proposed
\fbfc learning does not explicitly minimize any ``loss''
function. Moreover, rather than learning a representation for the
points that facilitates classification/regression, the \fbfc learns a
representation for entire classes, allowing test points to be compared
to classes for computing familiarity scores.

Given the correlation between a point $x$'s \fbf novelty score to its
minimum distance from the set that the \fbf
summarizes~\citep{dasgupta2018neural}, our proposed neuroscience
inspired \fbfc is perhaps closest to the nonparametric
$k$-nearest-neighbor classifier (\knnc). Vanilla \knnc does not have
an explicit loss or a training phase given a measure of similarity;
all the computation is shifted to inference. \fbfc does have an
explicit training phase, but requires only a single pass of the
training data -- once a point is processed into the \fbf, it can be
discarded, making \fbfc suitable for streaming data.

On a very high level, this is similar to cluster-based \knnc where
class specific training data (data with same labels) is summarized as
(multiple) cluster centers and used as a reduced training set on which
\knnc is applied. A variety of methods exists in literature that adopt
this simple idea of data reduction~\citep{zhou2010clustering_knn,
  parvin2012cluster,oigiaroglou2013homogeneous,oigiaroglou2016rhc,
  gallego2018cluster,gou2019localmean}. These algorithms are designed
with the goal of reducing the high computational \& storage
requirements of \knnc. Orthogonally, various data structures have been
utilized to accelerate the nearest-neighbor search in \knnc inference
representing the data as an index such as space-partitioning
trees~\cite{omohundro1989fiveballtree,
  beygelzimer2006cover,dasgupta2015rpt,ram2019kdtree} and hash tables
generated by {\em locality-sensitive}
hashes~\cite{gionis1999hashing,andoni2008lsh}.

The closely related locality-sensitive Bloom filter (LSBF)
\cite{kirsch2006distance_bloom,hua2012ls_bloom} also summarizes the
data similar to \fbf, relying on distance preserving random projection
\cite{vempala2004rp} to lower dimensionalities followed by quantizing
the projected vector to an integer. Under this scheme, two inputs
reset the same bit in the filter if they are assigned the exact same
projected vector. Performance of LSBF heavily depends on the choice of
hyperparameters that control the projection dimensionality and the
data-independent quantization scheme. \fbf has been shown to be
empirically outperform LSBF for novelty detection.

Multinomial regression with linear models and multi-layered perceptron
can also be viewed as learning a set of weight vectors corresponding
to each class, with the inner product of the test point with these
vectors driving the class assignment.


%
\section{\fh Bloom Filter Classifier (\fbfc)} \label{sec:algo}
The basic building block of our proposed algorithm is a fruit-fly
olfactory circuit inspired \fh function, first introduced by
\citet{dasgupta2017neural}. Here we consider the binarized
\fh~\citep{dasgupta2018neural}. For $x \in \Real^d$, the \fh function
$h \colon \Real^d \to \HC^{m}$ is defined as,
\begin{equation} \label{eq:flyhash}
    h(x) = \Gamma_\rho (M_{m}^s x),
\end{equation}
where $M_m^s \in \HC^{m \times d}$ is the randomized sparse lifting
binary matrix with $s \ll d$ nonzero entries in each row, and
$\Gamma_\rho \colon \Real^{m} \to \HC^{m}$ is the winner-take-all
function converting a vector in $\Real^m$ to one in $\HC^m$ by setting
the highest $\rho \ll m$ elements to $1$ and the rest to
zero\footnotemark.
\footnotetext{\fh~\cite{dasgupta2017neural} leaves the highest $\rho$
  elements as is and sets the rest to zero, but requires each
  $x \in \Real^d$ to be mean-centered ($\sum_{i=1}^d x_i = 0$). \fbf
  needs a binarized \fh \citep{dasgupta2018neural}, where
  mean-centering is redundant.}
Unlike random projection~\cite{vempala2004rp} which decreases data
dimensionality after projection, \fh is an upward projection which
increases data dimensionality ($m \gg d$). The hyper-parameters for
\fh are (i) the projected dimensionality $m \in \N$, (ii) projection
matrix nonzero count per row $s \in \N$, and (iii) the number of
nonzeros (NNZ) $\rho \in \N$ in the \fh.  The run time for \fh is
$O(ms + m\log \rho)$. The \fh function can also be viewed as a maximum
inner product search problem \citep{ram2012maximum,
  shrivastava2014asymmetric} where we seek the $\rho$ rows in
$M_{m}^s$ with the highest inner-product to $x$ and sped up using fast
algorithms.
\begin{figure}[t]
  \centering
  \begin{subfigure}{0.59\textwidth}
    \frame{\includegraphics[width=\textwidth]{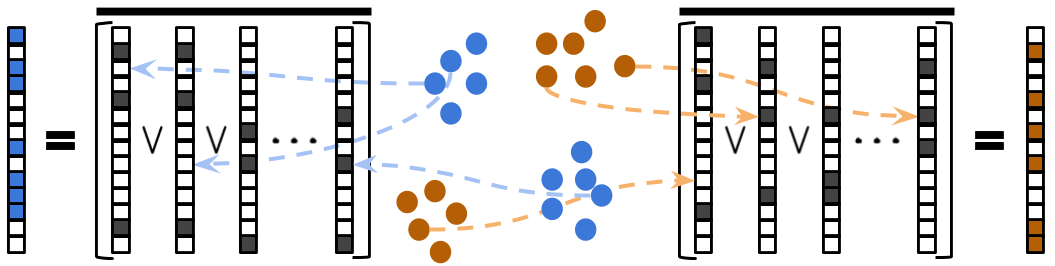}}
    \caption{\fbfc training.}
    \label{fig:fbfc-train}
  \end{subfigure}
  ~
  \begin{subfigure}{0.34\textwidth}
    \frame{\includegraphics[width=\textwidth]{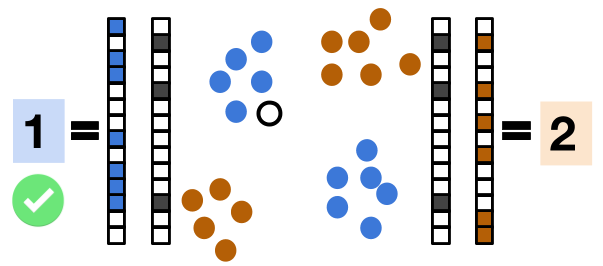}}
    \caption{\fbfc inference.}
    \label{fig:fbfc-infer}
  \end{subfigure}
  \caption{Visual depiction of \fbfc training
    (Algorithm~\ref{alg:train-infer}) and inference
    (Equation~\eqref{eq:fbfc-predict}) for \fbfc.  Colored circles
    correspond to the labeled training set. In
    Figure~\ref{fig:fbfc-train}, the high dimensional sparse {\fh}es
    for the points (stacked $\blacksquare$ \& $\square$) in each class
    are used to generate the per-class \fbf (\texttt{NOT}
    $\overline{(\cdot)}$ of the \texttt{OR}s $\vee$ of the hashes as
    per Equation~\eqref{eq:fbf_rewrite}). The $\bigcirc$ in
    Figure~\ref{fig:fbfc-infer} is the unlabeled point we infer on
    based on the dot-product of its \fh with each of the per-class
    {\fbf}s (eliding the denominator in
    Equation~\eqref{eq:fbfc-predict}). {\em Please view in color.}}
    \label{fig:fbfc-viz}
\end{figure}

Using \fh as an algorithmic building block, \citet{dasgupta2018neural}
construct a \fbf to succinctly summarize the data, and use it to
effectively solve the unsupervised learning task of novelty
detection. Here we extend the use of \fbf to classification, an
instance of supervised learning. Specifically, we use \fbf to
summarize each class separately -- the per-class \fbf encodes the
local neighborhoods of each class, and the high dimensional sparse
nature of \fh (and consequently \fbf) summarizes classes
with multi-modal distributions while mitigating overlap between the
{\fbf}s of other classes.

\begin{wrapfigure}{R}{0.5\textwidth}
{
\begin{algorithm}[H]
\DontPrintSemicolon
\SetAlgoLined
\caption{\fbfc training with training set $S \subset
  \Real^d \times [L]$, projected dimensionality $m \in \N$, NNZ for
  each row in the projection matrix $s \ll d$, NNZ in the \fh $\rho
  \ll m$.
  }
\label{alg:train-infer}
\SetKwProg{train}{Function Train\fbfc: $(S, m, \rho, s) \to (M_m^s, \{
  w_i, i \in [L] \})$}{}{end}
\train{}{
  Initialize $w_1,\ldots,w_L \gets \mathbf{1}_m \in \HC^m$ \;
  $M_m^s \in \HC^{m \times d}$ \tcp*[r]{$s$ NNZ per row}
  \For{$(x,y)\in S$}{
    $h(x) \gets \Gamma_\rho (M_m^s x)$ \;
    $w_y \gets w_y \bigwedge \overline{h(x)}$ 
  }
  \KwRet{$ (M_m^s, \{ w_i, i \in [L] \}) $} \;
}
\end{algorithm}
}
\end{wrapfigure}

\paragraph{\fbfc training}
Let $w_i \in\{0,1\}^m$ be the \fbf for any class $i \in [L] = \{1, 2,
\ldots, L\}$, with $w_i$ initialized to $\mathbf{1}_m \in \HC^m$, the
all one vector.  For any point $x \in \Real^d$ with label $y = i$ in
the training set $S \subset \Real^d \times [L]$, $w_i$ is updated with
the \fh $h(x)$ as follows -- the bit positions of $w_i$ corresponding
to the nonzero bit positions of $h(x)$ are set to zero, represented as
$w_i \gets (w_i \oplus h(x)) \wedge w_i = w_i \wedge \overline{h(x)}$,
where $\oplus$, $\wedge$ and $\overline{(\cdot)}$ are the
\texttt{XOR}, \texttt{AND} and \texttt{NOT} operators
respectively. Starting with $w_i = \mathbf{1}_m$, the updates for any
two examples $(x_1,y_1), (x_2,y_2)\in S$ with $y_1 = y_2 = i$ can be
succinctly written as $ w_i \gets
\overline{\overline{\mathbf{1}_m}\vee h(x_1)\vee h(x_2)}$ with the
application of De Morgan's law, with $\vee$ as \texttt{OR}. We can now
condense the \fbf construction for a class $i \in [L]$ to
\begin{align} \label{eq:fbf_rewrite}
  w_i
  = \overline{\overline{\mathbf{1}_m}\bigvee_{(x,y) \in S \colon y = i} h(x)}
  = \overline{\bigvee_{(x,y) \in S \colon y = i} h(x)}
  = \mathbf{1}_m\bigwedge_{(x,y) \in S \colon y = i} \overline{h(x)}
  = \bigwedge_{(x,y) \in S \colon y = i} \overline{h(x)}.
\end{align}
This new interpretation makes the \fbf construction trivially
parallelizable -- $w_i$ for each $i \in [L]$ can be computed either by
a series of commutative \texttt{OR}s followed by a \texttt{NOT} at the
end or by a series of commutative \texttt{AND}s, and the process is
order-independent. The $L$ per-class {\fbf}s (and the lifting matrix
$M_m^s$) constitute our proposed {\em
  \fbfc}. Algorithm~\ref{alg:train-infer} (Train\fbfc) presents the
\fbfc training, and Figure~\ref{fig:fbfc-train} visualizes the process
for a toy example.

\paragraph{\fbfc inference}
For a point $x$, we compute the per-class novelty scores $f_i(x) \in
[0,1], i \in [L]$ and the predicted label as:
\begin{equation} \label{eq:fbfc-predict}
  f_i(x) =\left(w_i^\top h(x) \right) / \rho, \quad
  \hat{y} = \arg\min_{i \in [L]} f_i(x)
\end{equation}
A high $f_i(x)$ indicates that majority of the training examples with
label $i$ are very different from $x$.
A small value of $f_i(x)$ indicates the existence of at least one
training example with label $i$ similar to $x$. The predicted label
for $x$ is simply the class with the smallest $f_i(x)$ (breaking ties
randomly). This is visualized in Figure~\ref{fig:fbfc-infer}.  The
per-class $f_i(x), i \in [L]$ can be converted into class
probabilities with a soft-max operation.

\paragraph{Computational complexities}
\fbfc training time with $n$ points is $O(nms + nm \log \rho + n
\rho)$ for the $n$ \fh operations, followed by $n$ \texttt{OR}s with
the class-specific {\fbf}s.  The commutative \texttt{OR} operator
allows us to chunk $n$ points across $T$ threads for parallel
processing of groups of size $n/T$ -- in a shared memory setting, all
threads operate on the same set of per-class {\fbf}s, resulting in a
$O\left(\frac{n}{T} \left(ms + m \log \rho + \rho \right)\right)$
runtime, demonstrating linear scaling with $T$.  In distributed memory
setting, each process operates its own set of {\fbf}s that are finally
{\em all-reduced} in additional $O(m L \log T)$ time.
Memory overhead for training with a batch of $n'$ training point is
$O(ms + n'm + mL)$. The batch size can be as small as $1$, implying a
minimum memory overhead during training of $O(m(s+L))$. If $T$ threads
are processing batches of size $n'$ each, the memory overhead
increases linearly with $T$.
\fbfc inference takes $O(ms + m \log \rho + L \rho)$ per
point. However, the inference problem $\min_{i \in L} w_i^\top h(x)$
can be reduced to a maximum inner product search
problem~\citep{ram2012maximum, shrivastava2014asymmetric} and solved
in time sublinear in $L$ for large $L$.
%
%

\paragraph{Inter-class similarities}
Given the per-class {\fbf}s $w_i, i \in [L]$, we propose the cosine
similarity $s_{ij}$ between the \fbf pair $(w_i, w_j)$ as a similarity
score between classes $i$ and $j$ to quantify the hardness of
differentiating these classes, and provides an insight into the
structure of the classification problem.

\paragraph{Non-binary \fbf}
In our binary \fbf design, for any test point $x$ and any $i\in [L]$,
let $A_x = \{j \colon (h(x))_j = 1 \}$ be the nonzero coordinates in
$h(x)$. Each coordinate of $A_x$ contributes in deciding the value of
$f_i(x)$. For any $j\in A_x$, it is possible that a single training
example $x'$ from class $i$ sets the contribution of the
$j^{\mbox{th}}$ coordinate to zero in the computation of $f_i(x)$ --
it is only required that $h((x'))_j = 1$; since $h$ is randomized,
there is always a nonzero probability of this event. Also, for any
$j,k\in A_x, j\neq k$, if $w_{ij}=w_{ik}=0$ (the $j^{\text{th}}$ and
$k^{\text{th}}$ element in the \fbf for class $i$), coordinates $j$
and $k$ are indistinguishable in terms of their contribution to
$f_i(x)$. To address these limitations, we present a modified \fbf
design which aims to capture neighborhoods and distribution
information more effectively, by allowing coordinates of $w_i$ to take
value in $[0,1]$. In this design, for any fixed $c\in (0,1]$, the
$j^{\mbox{th}}$ coordinate of $w_i$ is set as follows, with $c = 1$
corresponding to binary \fbf:
\begin{equation} \label{eq:nb-fbf-agg}
    w_{ij}=(1-c)^{\left| \left\{(x,y)\in S \colon y=i \mbox { and }
      (h(x))_j=1 \right\} \right|},
\end{equation}
The label for a test point $x\in\mathbb{R}^d$ is still
computed as $\hat{y}=\arg\min_{i \in [L]} w_i^{\top}h(x)$. We term
this form of the Fly Bloom Filter as $\fbf^*$ and the corresponding
classifier as $\fbfc^*$. For any $i,j$, since $w_{ij}$ is computed by
counting the number of examples $(x',y')\in S$ satisfying $y'=i \mbox
{ and } (h(x'))_j=1$, and raising this count to the power of $(1-c)$,
$\fbf^*$ is still equally parallelizable as the binary \fbf -- the
\texttt{OR} aggregation followed by a \texttt{NOT} is now instead
a (sparse) summation over the {\fh}es, followed by an
exponentiation of $(1 - c)$.  The exponential decay in
equation~\eqref{eq:nb-fbf-agg} allows $w_{ij}$ to be
determined by a local neighborhood of size dependent on $c$. We
discuss this further in Supplement S1.
%


%
\section{Theoretical analysis} \label{sec:theory}
In this section we present theoretical analysis of \fbfc, identifying
conditions under with \fbfc agrees with the nearest-neighbor
classifier \onennc. First we describe the general setup and present
our generic analysis when certain abstract conditions are
satisfied. Then we consider two special cases that are different
instantiations of this generic result. All proofs are deferred to
Supplement S2.
\subsection{Preliminaries}
We denote a single row of a projection matrix $M_m^s$ by
$\theta\in\{0,1\}^d$ drawn i.i.d. from $Q$, the uniform distribution
over all vectors in $\{0,1\}^d$ with exactly $s$ ones, satisfying
$s\ll d$. For ease of notation, we use $M$ instead of $M_m^s$ and we
use an alternate formulation of the winner-take-all strategy as
suggested in~\cite{dasgupta2018neural}, where for any $x\in\R^d,
\tau_x$ is a threshold that sets largest $\rho$ entries of $Mx$ to one
(and the rest to zero) in expectation.
Specifically, for a given $x\in\R^d$ and for any fraction $0<f<1$, we
define $\tau_x(f)$ to be the top $f$-fractile value of the
distribution $\theta^{\top} x$, where $\theta\sim Q$:
\begin{equation}
    \tau_x(f)=\sup\{v: \pr_{\theta\sim Q}(\theta^{\top} x\geq v)\geq f\}
\end{equation}
We note that for any $0<f<1$, $\pr_{\theta\sim Q}(\theta^{\top} x\geq
\tau_x(f))\approx f$, where the approximation arises from possible
discretization issues. For convenience, henceforth we will assume that
this is an equality:
\begin{equation}
    \pr_{\theta\sim Q}(\theta^{\top} x\geq \tau_x(f))= f
\end{equation}
For any two $x,x'\in\R^d$, we define: $q(x,x')=\pr_{\theta\sim Q}
\left(\theta^{\top} x'\geq \tau_{x'}\left( \rho / m \right) ~|~
\theta^{\top} x \geq\tau_x\left( \rho / m \right) \right)$.
This can be interpreted as follows: with $h(x), h(x')$ as the $\fh$es
of $x$ and $x'$, respectively, $q(x,x')$ is the probability that
$(h(x'))_j=1$ given that $(h(x))_j=1$, for any specific $j$.

We analyze classification performance of \fbfc trained on a training
set $S=\{(x_i,y_i)\}_{i=1}^{n_0+n_1}\subset \mathcal{X} \times
\{0,1\}$, where $S=S^{1}\cup S^{0}$, $S^{0}$ is a subset of $S$ having
label 0, and $S^{1}$ is a subset of $S$ having label 1, satisfying
$|S^{0}|=n_0$, $|S^{1}|=n_1$ and $n=\max\{n_0,n_1\}$. For appropriate
choice of $m$, let $w_{0}, w_{1}\in\{0,1\}^m$ be the $\fbf$s
constructed using $S^{0}$ and $S_1$ respectively.

\subsection{Connection to \onennc}
Without loss of generality, for any test example $x\in\mathcal{X}$,
assume that its nearest neighbor from $S$ has class label 1. Then
\onennc will predict $x$'s class label to be 1. With $h(x)$ as the \fh
of $x$ (equation~\ref{eq:flyhash}), if we are able to show that
$\E_M(w_{1}^{\top} h(x))< \E_M(w_{0}^{\top}h(x))$ then \fbfc will
predict, in expectation, $x$'s label to be 1.  The following lemma
quantifies the expectation of class specific novelty scores and their
upper and lower bounds.
\begin{lemma}\label{lem:expectation}
Fix any $x\in\R^d$ and let $h(x)\in\{0,1\}^m$ be its \fh using
equation~\ref{eq:flyhash}. Let $x_{NN}^{i}=\argmin_{(x',y')\in
  S^{i}}\|x-x'\|$ for $i\in\{0,1\}$, where $\|\cdot\|$ is any distance
metric. Let $A_{S^1}=\{\theta: \cap_{(x',y')\in S^{1}} ~ \theta^{\top}
x'<\tau_{x'}(\rho/m)\}$ and $A_{S^0}=\{\theta: \cap_{(x',y')\in S^{0}}
~ \theta^{\top} x'<\tau_{x'}(\rho/m)\}$. Then the following holds,
where the expectation is taken over the random choice of projection
matrix $M$.\\
(i) $\E_M(\frac{w_{1}^{\top} h(x)}{\rho})=\pr_{\theta\sim
  Q}\left(A_{S^1} | \theta^{\top}x\geq \tau_x(\frac{\rho}{m}\right)$,
(ii) $\E_M(\frac{w_{0}^{\top}h(x)}{\rho})=\pr_{\theta\sim
  Q}\left(A_{S^0} | \theta^{\top}x\geq
\tau_x(\frac{\rho}{m})\right)$\\
(iii) $\E_M(\frac{w_{1}^{\top} h(x)}{\rho})\geq 1-\sum_{x'\in
  S^{1}}q(x,x')$,
(iv) $\E_M(\frac{w_{1}^{\top} h(x)}{\rho})\leq 1-q(x,x_{NN}^1)$\\
(v) $\E_M(\frac{w_{0}^{\top} h(x)}{\rho})\geq 1-\sum_{x'\in
  S^{0}}q(x,x')$,
(vi) $\E_M(\frac{w_{0}^{\top} h(x)}{\rho})\leq 1-q(x,x_{NN}^0)$
\end{lemma}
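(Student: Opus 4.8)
The plan is to exploit the alternate fixed-threshold formulation of the winner-take-all step, under which the coordinates of the \fh become independent and identically distributed, so that the expected novelty score decomposes coordinate by coordinate. Write $\theta_1,\ldots,\theta_m$ for the rows of $M$, each drawn i.i.d.\ from $Q$. Under the fractile formulation, $(h(x))_j = \mathbb{1}[\theta_j^\top x \geq \tau_x(\rho/m)]$, so the event $\{(h(x))_j=1\}$ depends only on $\theta_j$ and the $m$ coordinates are independent. Similarly $(w_i)_j = \prod_{(x',y')\in S^i}\mathbb{1}[\theta_j^\top x' < \tau_{x'}(\rho/m)] = \mathbb{1}[\theta_j \in A_{S^i}]$ depends only on $\theta_j$.

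First I would expand $w_i^\top h(x)=\sum_{j=1}^m (w_i)_j (h(x))_j$ and take $\E_M$. Since the summands are identically distributed, $\E_M(w_i^\top h(x)) = m\,\pr_{\theta\sim Q}(A_{S^i}\cap\{\theta^\top x\geq\tau_x(\rho/m)\})$. Dividing by $\rho$ and applying the definition of conditional probability together with the assumed equality $\pr_{\theta\sim Q}(\theta^\top x\geq\tau_x(\rho/m))=\rho/m$, the factor $m/\rho$ cancels the marginal $\rho/m$ and leaves exactly $\pr_{\theta\sim Q}(A_{S^i}\mid\theta^\top x\geq\tau_x(\rho/m))$. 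This establishes (i) and (ii).

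For the lower bounds (iii) and (v), I would pass to the complement $\overline{A_{S^i}}=\bigcup_{(x',y')\in S^i}\{\theta^\top x'\geq\tau_{x'}(\rho/m)\}$ and apply a union bound under the conditioning event $C=\{\theta^\top x\geq\tau_x(\rho/m)\}$. Each term $\pr_{\theta\sim Q}(\theta^\top x'\geq\tau_{x'}(\rho/m)\mid C)$ is precisely $q(x,x')$ by definition, so $\pr(\overline{A_{S^i}}\mid C)\leq\sum_{x'\in S^i}q(x,x')$, whence $\pr(A_{S^i}\mid C)\geq 1-\sum_{x'\in S^i}q(x,x')$. For the upper bounds (iv) and (vi), I would instead use that the single event $\{\theta^\top x_{NN}^i\geq\tau_{x_{NN}^i}(\rho/m)\}$ is contained in the union $\overline{A_{S^i}}$, so $\pr(\overline{A_{S^i}}\mid C)\geq q(x,x_{NN}^i)$ and therefore $\pr(A_{S^i}\mid C)\leq 1-q(x,x_{NN}^i)$. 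This inequality in fact holds for any single training point of the class; the nearest neighbor is the natural choice because $q(x,\cdot)$ is expected to be largest there, giving the tightest bound.

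The main obstacle is conceptual rather than computational: the clean coordinate-wise decomposition hinges entirely on replacing the exact top-$\rho$ winner-take-all (which couples the coordinates through the hard constraint of exactly $\rho$ ones) with the fixed fractile threshold $\tau_x(\rho/m)$ and treating $\pr_{\theta\sim Q}(\theta^\top x\geq\tau_x(\rho/m))=\rho/m$ as an exact equality. Once this decoupling is in place, every remaining step is an elementary manipulation of conditional probabilities and union bounds; the only care needed is to verify that both $(h(x))_j$ and $(w_i)_j$ are measurable with respect to the single row $\theta_j$, so that the per-coordinate independence and the identical-distribution symmetry genuinely hold.
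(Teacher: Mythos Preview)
Your proposal is correct and follows essentially the same approach as the paper: the paper's proof simply invokes Lemmas~2 and~3 of \cite{dasgupta2018neural} applied to each class-specific \fbf, and your argument is precisely a self-contained unpacking of those lemmas (coordinate-wise decomposition under the fractile-threshold formulation, then a union bound for the lower bounds and a single-event containment for the upper bounds). The only difference is presentational---you spell out the per-row independence and the conditional-probability manipulation explicitly rather than citing the external results.
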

This immediately provides us a sufficient condition for \fbfc to agree
with \onennc on any test point $x$ in expectation -- the upper bound
of $\E_M ( w_{1}^{\top} h(x))$ should be strictly smaller than lower
bound of $\E_M (w_{0}^{\top} h(x))$.
\begin{theorem}\label{th:general}
Fix any $\delta\in(0,1)$, $s\ll d$, and $\rho\ll m$. Given a training
set $S$ as described above and a test example $x\in\mathcal{X}$, let
$x_{NN}$ be its closest point from $S$ measured using $\ell_p$ metric
for an appropriate choice of $p$. If (i)
$\rho=\Omega(\log(1/\delta))$, (ii) $\|x-x_{NN}\|_p=O(1/s)$, and (iii)
$m=\Omega(n\rho)$, then under mild conditions, with probability at
least $1-\delta$ (over the random choice of projection matrix $M$),
prediction of \fbfc on $x$ agrees with the prediction of 1-NN
classifier on $x$.
\end{theorem}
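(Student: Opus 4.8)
The plan is to reduce the statement to a separation-plus-concentration argument. Assume without loss of generality that the overall nearest neighbor $x_{NN}$ carries label $1$, so that \onennc outputs $1$; it then suffices to show that $w_1^\top h(x) < w_0^\top h(x)$ with probability at least $1-\delta$, since this forces $f_1(x) < f_0(x)$ and hence \fbfc also outputs $1$. First I would establish that the two novelty scores are separated \emph{in expectation}, and then show that each score concentrates tightly enough around its mean that the ordering survives.

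For the expectation separation I would invoke Lemma~\ref{lem:expectation}. Part~(iv) gives $\E_M(w_1^\top h(x)/\rho) \le 1 - q(x, x_{NN})$ (using $x_{NN}^1 = x_{NN}$), while part~(v) gives $\E_M(w_0^\top h(x)/\rho) \ge 1 - \sum_{x' \in S^0} q(x,x')$. Thus it is enough to certify
\[
  \sum_{x' \in S^0} q(x,x') \;<\; q(x, x_{NN}).
\]
The lower bound on the right is where hypothesis~(ii) enters: since $q(x,x')$ is the conditional probability that the hash bit of $x'$ fires given that the bit of $x$ fires, I would show that when $\|x - x_{NN}\|_p = O(1/s)$ the projections $\theta^\top x$ and $\theta^\top x_{NN}$ differ by only $O(\|x-x_{NN}\|_p)$ for a weight vector $\theta$ supported on $s$ coordinates, so the two threshold events nearly coincide and $q(x, x_{NN}) \ge 1 - \varepsilon$ for a small constant $\varepsilon$. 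The upper bound on the left uses hypothesis~(iii): for opposite-class points farther than $x_{NN}$, $q(x,x')$ is on the order of the unconditional firing probability $\rho/m$, so $m = \Omega(n\rho)$ makes each term $O(1/n)$ and keeps the sum over the $n_0 \le n$ points of $S^0$ below $q(x, x_{NN})$. The phrase ``under mild conditions'' is precisely the requirement that the $S^0$ points be far enough for this cross-class sum to remain strictly below the near-neighbor term.

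For concentration I would exploit the fact that, under the threshold formulation, $w_i^\top h(x) = \sum_{j=1}^m (h(x))_j\, w_{ij}$ is a sum of $m$ \emph{independent} Bernoulli terms: the $j$-th term $(h(x))_j\, w_{ij}$ depends only on the $j$-th row $\theta_j$ of $M$, and the rows are drawn i.i.d.\ from $Q$. Each sum therefore has mean $\rho\, p_i$ with $p_i = \E_M(w_i^\top h(x)/\rho)$ and is bounded by $\|h(x)\|_1 = \Theta(\rho)$, so a multiplicative Chernoff bound gives deviations of order $\sqrt{\rho\,\log(1/\delta)}$. Hypothesis~(i), $\rho = \Omega(\log(1/\delta))$, ensures these deviations are smaller than the mean gap $\rho(p_0 - p_1)$ guaranteed by the expectation separation; a union bound over the two scores then yields $w_1^\top h(x) < w_0^\top h(x)$ with probability at least $1-\delta$.

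I expect the main obstacle to be the two distance-to-$q$ estimates rather than the concentration step. Lower-bounding $q(x, x_{NN})$ requires controlling how the top-fractile thresholds $\tau_x(\rho/m)$ and $\tau_{x_{NN}}(\rho/m)$ and the joint law of $(\theta^\top x, \theta^\top x_{NN})$ behave as $x_{NN} \to x$, which is exactly where the $O(1/s)$ scaling in~(ii) must be tracked through the sparse projection. Making the cross-class bound $\sum_{x' \in S^0} q(x,x') < q(x,x_{NN})$ rigorous --- that is, pinning down the ``mild conditions'' on the geometry of $S^0$ and verifying that $q(x,x') \approx \rho/m$ for well-separated pairs --- is the other delicate part; once both $q$-estimates are in hand, the Chernoff argument is routine.
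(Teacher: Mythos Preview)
Your proposal is correct and follows essentially the same route as the paper's proof sketch: separate the expected novelty scores via Lemma~\ref{lem:expectation}(iv)--(v) by lower-bounding $q(x,x_{NN})$ and upper-bounding $\sum_{x'\in S^0} q(x,x')$, then upgrade to high probability with a multiplicative Chernoff bound on the i.i.d.\ per-row indicators $U_j=(h(x))_j\,w_{ij}$. The only nuance is that the paper reads ``mild conditions'' primarily as structural or distributional hypotheses on $\mathcal{X}$ (a common threshold $\tau_x$ for all $x$, or permutation invariance of the test point) that make $q(\cdot,\cdot)$ tractable---in the permutation-invariant instantiation the cross-class control comes from $\E_x q(x,x')=\rho/m$ together with Markov's inequality rather than from a pointwise geometric separation of $S^0$---but this does not change the architecture of your argument.
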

\noindent{\em Proof (sketch).} If either the structure of
$\mathcal{X}$ allows us to choose a threshold $\tau_x$ that is
identical for any $x\in\mathcal{X}$, resulting in a closed form
solution for the quantity $q(x,x')$ for any $x,x'\in\mathcal{X}$, or
the distributional assumption on $\mathcal{X}$ sets the quantity $\E_x
q(x,x')$ to be identical for all $x'\in S$, then all the three
conditions mentioned in theorem are satisfied. This property, in
conjunction with Lemma~\ref{lem:expectation}, yields the desired
result in expectation under mild conditions. The high probability
result then follows using standard concentration bounds.

\paragraph{Multi-class classification} The above results can be
extended to multi-class classification problem involving $L$ classes
in a straight forward manner by applying concentration result to each
of the $\left((w_i^{\top}h(x))/\rho\right)$, for $i\in [L]$, and using
a union bound (see Supplement S2.4).

Note that the \fbf guarantees for novelty detection are limited to two
special cases: (i) examples with binary feature vectors containing
fixed number of ones, and (ii) examples sampled from a permutation
invariant distribution \cite{dasgupta2018neural}. We extend this
analysis with these two cases to provide guarantees for \fbfc in
multi-class classification, which is a distinct learning problem from
novelty detection.

\subsection{Special case I: Binary data}
In this section we consider a special case where examples from each
class have binary feature vectors with fixed number of ones. In
particular, let $\mathcal{X}=\mathcal{X}_b=\{x\in\HC^d : |x|_1 = b <
d\}$.
\begin{theorem}\label{th:binary_new}
Let $S$ be a training set as given above. Fix any $\delta\in(0,1)$,
and set $\rho\geq\frac{12}{\mu}\ln(4/\delta)$, $m\geq (d/b)n\rho$, and
$s=\log_{d/b}(m/\rho)$, where $\mu=\min\left\{\E_M\left((w_0^{\top}
h(x))/\rho\right),\E_M\left((w_1^{\top} h(x))/\rho\right)\right\}$ and
$h(x)$ is the \fh (eq.~\eqref{eq:flyhash}). For a test point
$x\in\mathcal{X}$, let its closest point from $S$ measured using
$\ell_1$ metric be $x_{NN}$, having label $y_{NN}\in\{0,1\}$,
satisfies, (i) $\|x-x_{NN}\|_1\leq 2b(1-b/d) /3s$, and (ii)
$\|x-x_i\|_1\geq 2b(1-b/d)$ for all $(x_i,y_i)\in S$, with $y_i\neq
y_{NN}$. Let $w_{0}, w_{1}\in\{0,1\}^m$ be the {\fbf}s constructed
using $S^{0}$ and $S^{1}$ respectively. Then, with probability at
least $1-\delta$ (over the random choice of projection matrix $M$),
\fbfc prediction on $x$ agrees with the \onennc prediction on $x$.
\end{theorem}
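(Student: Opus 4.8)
The plan is to instantiate the generic argument behind Theorem~\ref{th:general} by computing the two data-dependent quantities $\tau_x(\rho/m)$ and $q(x,x')$ in closed form, which is possible precisely because every $x\in\mathcal{X}_b$ has exactly $b$ ones. First I would observe that for $\theta\sim Q$ (uniform over $\{0,1\}^d$ with exactly $s$ ones) the inner product $\theta^\top x$ equals the overlap $|\mathrm{supp}(\theta)\cap\mathrm{supp}(x)|$, whose distribution is hypergeometric and depends only on $(d,b,s)$, not on the identity of $x$. Hence a single threshold works for every $x\in\mathcal{X}_b$, so $\tau_x$ is identical across the domain --- exactly the structural hypothesis that the sketch of Theorem~\ref{th:general} asks for. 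With $s=\log_{d/b}(m/\rho)$ we have $(b/d)^s=\rho/m$, and since $\pr_{\theta\sim Q}(\theta^\top x=s)=\binom{b}{s}/\binom{d}{s}\approx (b/d)^s$, I would take $\tau=s$, so that $\{\theta^\top x\geq\tau\}$ is the event that all $s$ ones of $\theta$ fall inside $\mathrm{supp}(x)$, which then has probability $\approx\rho/m$ as required. The approximation $\binom{b}{s}/\binom{d}{s}\approx(b/d)^s$ is where the ``mild conditions'' enter, since it is exact only up to the lower-order factors $\prod_{i<s}\frac{b-i}{d-i}$ versus $(b/d)^s$.

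Next I would compute $q(x,x')$. Conditioned on $\theta^\top x=s$, the support of $\theta$ is a uniformly random $s$-subset of $\mathrm{supp}(x)$, so $\theta^\top x'=s$ iff this subset also lies in $\mathrm{supp}(x')$, giving $q(x,x')=\binom{a}{s}/\binom{b}{s}\approx(a/b)^s$ where $a=|\mathrm{supp}(x)\cap\mathrm{supp}(x')|=b-\tfrac12\|x-x'\|_1$; thus $q(x,x')\approx(1-\|x-x'\|_1/(2b))^s$. Taking without loss of generality $y_{NN}=1$, I would plug this into Lemma~\ref{lem:expectation}(iv) for the upper bound and (v) for the lower bound. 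Condition (i), $\|x-x_{NN}\|_1\leq 2b(1-b/d)/(3s)$, together with Bernoulli's inequality gives $q(x,x_{NN})\geq(1-(1-b/d)/(3s))^s\geq 1-(1-b/d)/3$, so $\E_M(w_1^\top h(x)/\rho)\leq(1-b/d)/3$. Condition (ii), $\|x-x'\|_1\geq 2b(1-b/d)$ for every $x'\in S^0$, forces $a/b\leq b/d$ and hence $q(x,x')\leq(b/d)^s=\rho/m$; summing over $S^0$ and using $m\geq(d/b)n\rho$ yields $\sum_{x'\in S^0}q(x,x')\leq n\rho/m\leq b/d$, so $\E_M(w_0^\top h(x)/\rho)\geq 1-b/d$. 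Since $(1-b/d)/3<1-b/d$, the expected novelty score of the correct class is strictly below that of the wrong class, which establishes agreement in expectation.

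Finally I would upgrade this to the high-probability statement by concentration. Each $w_i^\top h(x)=\sum_{j=1}^m\mathbb{1}[\theta_j\in A_{S^i}\ \text{and}\ \theta_j^\top x\geq\tau]$ is a sum of $m$ i.i.d.\ Bernoulli indicators (the rows $\theta_j$ are independent), with mean $\rho\mu_i$ where $\mu_i=\E_M(w_i^\top h(x)/\rho)$. A multiplicative Chernoff bound on each of $w_0^\top h(x)$ and $w_1^\top h(x)$, combined with a union bound over the two classes and the choice $\rho\geq(12/\mu)\ln(4/\delta)$ with $\mu=\min\{\mu_0,\mu_1\}$, shows that both scores stay within their respective halves of the gap $[\mu_1,\mu_0]$, so $w_1^\top h(x)<w_0^\top h(x)$ with probability at least $1-\delta$, matching the \onennc prediction. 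I expect the main obstacle to be making the combinatorial approximations $\binom{b}{s}/\binom{d}{s}\approx(b/d)^s$ and $q(x,x')\approx(a/b)^s$ precise enough that the two expectation bounds retain a constant-factor separation relative to $\mu$, since it is this separation --- not the raw gap --- that the Chernoff step with the stated constants $(12,4/\delta)$ must consume; by contrast, the independence of the rows that makes the Chernoff bound applicable is the one clean part of the argument.
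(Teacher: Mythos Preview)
Your proposal is correct and follows essentially the same route as the paper: set $\tau_x=s$ via $\binom{b}{s}/\binom{d}{s}\approx(b/d)^s$, derive $q(x,x')\approx(x^\top x'/b)^s$, feed conditions (i) and (ii) into Lemma~\ref{lem:expectation}(iv)--(v) to get $\E_M(w_1^\top h(x)/\rho)\leq s\epsilon=(1-b/d)/3$ and $\E_M(w_0^\top h(x)/\rho)\geq 1-b/d$, and finish with a multiplicative Chernoff bound on the i.i.d.\ row indicators. The paper makes the last step concrete by taking the Chernoff deviation parameter equal to $1/2$ (yielding the factors $3/2$ and $1/2$ and the constant $12$), so your ``halves of the gap'' phrasing should be read as $(3/2)\mu_1\leq(1-b/d)/2\leq(1/2)\mu_0$ rather than an additive midpoint; and the approximation issue you flag as the main obstacle is simply assumed away in the paper as a mild condition.
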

Here $s=O(\log n)$, which is the same logarithmic dependence that was
also established in \cite{dasgupta2018neural}.
\subsection{Special Case II: Permutation invariant distribution in $\Real^d$}
Here we show that, for \emph{permutation invariant} distributions,
\fbfc agrees with \onennc in $\Real^d$ with high
probability. Permutation invariant distribution in the \fbf context
was introduced in \citet{dasgupta2018neural} and defined as a
distribution $P$ over $\Real^d$ 
permutation $\sigma$ of $\{1,2,\ldots,d\}$ and any
$x=(x_1,\ldots,x_d)\in\Real^d$,
$P(x_1,\ldots,x_d)=P(x_{\sigma(1)},\ldots,x_{\sigma(d)})$.  Precisely,
we show
%
\begin{theorem}\label{th:real_new}
Let $S$ be a training set as given above. Fix any $\delta\in(0,1)$,
$s\ll d$, and set $\rho\geq\frac{48}{\mu}\ln(8/\delta)$ and $m\geq
14n\rho/\delta$, where $\mu=\min\left\{\E_M\left((w_0^{\top}
h(x))/\rho\right),\E_M\left((w_1^{\top} h(x))/\rho\right)\right\}$,
$h(x)$ is the \fh (eq.~\eqref{eq:flyhash}), and $w_{0},
w_{1}\in\{0,1\}^m$ are the {\fbf}s constructed using $S^{0}$ and
$S^{1}$ respectively. For a test point $x\in\Real^d$, sampled from a
permutation invariant distribution, let $x_{NN}$ be its nearest
neighbor from $S$ measured using $\ell_{\infty}$ metric, which
satisfies $\|x-x_{NN}\|_{\infty}\leq \Delta/s$, where
$\Delta=\frac{1}{2}\left(\tau_x(2\rho/m)-\tau_x(\rho/m)\right)$ and
has label $y_{NN}\in\{0,1\}$. Then, with probability at least
$1-\delta$ (over the random choice of projection matrix $M$), \fbfc
prediction on $x$ agrees with \onennc prediction on $x$.
\end{theorem}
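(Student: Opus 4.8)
The plan is to reduce the claim to a comparison of the two expected familiarity scores and then upgrade it to a high-probability statement via concentration. Assume without loss of generality that $x_{NN}\in S^1$, so that \onennc predicts label $1$; it then suffices to show $w_1^\top h(x)<w_0^\top h(x)$ with probability at least $1-\delta$. Lemma~\ref{lem:expectation}(iv) and (v) give the one-sided expectation bounds $\E_M(w_1^\top h(x)/\rho)\le 1-q(x,x_{NN})$ and $\E_M(w_0^\top h(x)/\rho)\ge 1-\sum_{x'\in S^0}q(x,x')$, so the expected gap satisfies $\E_M(w_0^\top h(x)/\rho)-\E_M(w_1^\top h(x)/\rho)\ge q(x,x_{NN})-\sum_{x'\in S^0}q(x,x')$. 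The strategy is to show that the positive term $q(x,x_{NN})$ is at least a constant while the aggregate $\sum_{x'\in S^0}q(x,x')$ is small, and finally that the observed scores concentrate around these means tightly enough to preserve the ordering.

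For the far class I would exploit permutation invariance to show that the baseline contribution of a typical point is exactly $\rho/m$. Writing $q(x,x')=(m/\rho)\,\pr_\theta(\theta^\top x'\ge\tau_{x'}(\rho/m),\ \theta^\top x\ge\tau_x(\rho/m))$ and using $\pr_\theta(\theta^\top x\ge\tau_x(\rho/m))=\rho/m$, the key observation is that $g(\theta):=\pr_{x'\sim P}(\theta^\top x'\ge\tau_{x'}(\rho/m))$ is invariant under permuting the support of $\theta$, because both $P$ and the map $x'\mapsto\tau_{x'}(\rho/m)$ are permutation invariant; hence $g(\theta)$ is constant over $\mathrm{supp}(Q)$ and equals $\rho/m$. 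Taking expectation over an i.i.d.\ sample then yields $\E_{x'\sim P}\,q(x,x')=\rho/m$, so $\E\big(\sum_{x'\in S^0}q(x,x')\big)=n_0\rho/m\le\delta/14$ once $m\ge 14n\rho/\delta$. A Markov bound then guarantees $\sum_{x'\in S^0}q(x,x')<1/4$ with probability at least $1-\delta/3$ over the permutation-invariant sampling.

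The delicate step, and the one I expect to be the main obstacle, is the lower bound $q(x,x_{NN})\ge 1/2$. Here the $\ell_\infty$ hypothesis does the work: for any row $\theta$ with exactly $s$ ones, $|\theta^\top x-\theta^\top x_{NN}|\le s\,\|x-x_{NN}\|_\infty\le\Delta$, so on the conditioning event $\{\theta^\top x\ge\tau_x(\rho/m)\}$ we have $\theta^\top x_{NN}\ge\theta^\top x-\Delta$. The choice of $\Delta$ as half the gap between the $\rho/m$- and $2\rho/m$-fractile thresholds of $\theta^\top x$ is calibrated precisely so that a shift by $\Delta$ costs only half of that fractile gap; consequently $q(x,x_{NN})\ge \pr_\theta(\theta^\top x\ge \tau_x(\rho/m)+\Delta)/(\rho/m)\ge 1/2$. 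The subtlety is that this argument must also reconcile $\tau_{x_{NN}}(\rho/m)$ with $\tau_x(\rho/m)$ — the event defining $q(x,x_{NN})$ uses $x_{NN}$'s own threshold, not that of $x$ — and it is exactly permutation invariance, which forces the fractile map $\tau_{x'}(\cdot)$ to vary only negligibly across nearby $x'$, together with mild regularity of this map on $[\rho/m,2\rho/m]$, that closes the gap. Arranging the constants so that the loss is no worse than the factor $1/2$ is where the care is needed.

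It remains to pass from expectations to a high-probability guarantee. In the threshold formulation the rows $\theta_1,\dots,\theta_m$ are i.i.d., so $w_c^\top h(x)=\sum_{j=1}^m \mathbf 1(\theta_j\in A_{S^c})\,\mathbf 1(\theta_j^\top x\ge\tau_x(\rho/m))$ is a sum of $m$ i.i.d.\ Bernoulli terms, i.e.\ $\mathrm{Binomial}(m,\mu_c\rho/m)$ with $\mu_c=\E_M(w_c^\top h(x)/\rho)$. A multiplicative Chernoff bound, using $\rho\ge\frac{48}{\mu}\ln(8/\delta)$ with $\mu=\min\{\mu_0,\mu_1\}$, confines each $w_c^\top h(x)/\rho$ to within $1/16$ of $\mu_c$ with probability at least $1-\delta/4$. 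Combining the previous two paragraphs, the expected gap is at least $q(x,x_{NN})-\sum_{x'\in S^0}q(x,x')\ge 1/2-1/4=1/4$, which strictly exceeds the total fluctuation $2\cdot\tfrac{1}{16}$; a union bound over the two classes, the two deviation directions, and the Markov event keeps the total failure probability below $\delta$, so $w_1^\top h(x)<w_0^\top h(x)$ and \fbfc agrees with \onennc. The multi-class case follows by applying the same concentration to each $w_i^\top h(x)/\rho$ and taking a union bound over the $L$ classes.
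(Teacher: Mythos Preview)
Your overall architecture matches the paper's proof: bound $\E_M(w_1^\top h(x)/\rho)\le 1/2$ via $q(x,x_{NN})\ge 1/2$ (this is Lemma~9 of \cite{dasgupta2018neural}, which your $\ell_\infty$ sketch is essentially reproducing), bound $\E_M(w_0^\top h(x)/\rho)\ge 1-\alpha$ via permutation invariance plus Markov, and then apply multiplicative Chernoff. Two concrete steps fail as written.

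First, in the permutation-invariance step you average over the wrong variable. The theorem assumes only that the \emph{test point} $x$ is drawn from a permutation-invariant $P$; the training set $S^0$ is fixed and carries no distributional assumption. The identity you need is $\E_x\,q(x,x')=\rho/m$ for each fixed $x'\in S^0$ (Corollary~11 of \cite{dasgupta2018neural}), and Markov is then applied in the randomness of $x$, not over an ``i.i.d.\ sample'' $S^0\sim P$. Your mechanism (showing $g(\theta)$ is constant on $\mathrm{supp}(Q)$ by permutation invariance) is the right one, but it must be run with $g(\theta)=\Pr_{x\sim P}(\theta^\top x\ge\tau_x(\rho/m))$. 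Second, your numerical constants do not close. The hypothesis $\rho\ge(48/\mu)\ln(8/\delta)$ buys \emph{multiplicative} Chernoff with $\epsilon=1/4$, i.e.\ $w_c^\top h(x)/\rho\in[\tfrac34\mu_c,\tfrac54\mu_c]$, not an additive $\pm 1/16$. With your Markov threshold $\alpha=1/4$ you obtain $\mu_1\le 1/2$, $\mu_0\ge 3/4$, and hence high-probability bounds $w_1^\top h(x)/\rho\le 5/8$ versus $w_0^\top h(x)/\rho\ge 9/16$, which is the wrong inequality. The paper instead takes $\alpha=1/7$ (this is exactly where $m\ge 14n\rho/\delta$ comes from, via $m\ge 2n\rho/(\alpha\delta)$ with Markov failure $\delta/2$), giving $w_0^\top h(x)/\rho\ge(3/4)(6/7)=9/14>5/8$, and the ordering goes through.
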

\paragraph{Towards $\Real^d$}
The structure of the binary and permutation-invariant distributions
allow us to get these novel, yet limited, result. Similar results for
general $\R^d$ are more challenging and non-trivial -- for any
$x,x'\in\R^d, x\neq x'$, the thresholds $\tau_x$ and $\tau_{x'}$ will
be different and a closed form solution for $q(x,x')$ may not exist,
and we need to find explicit bounds for this quantity. Our hypothesis
is that we will need various data dependent assumptions, including
smoothness of conditional probability function and Tysbakov's margin
conditions~\citep{tsybakov2004optimal,audibert2007fast}, to get a
similar result for $\R^d$.


%
\section{Empirical evaluations} \label{sec:emp-evals}
In this section, we evaluate the empirical performance of
\fbfc. First, we evaluate the dependence of \fbfc on its
hyper-parameteres. Then, we compare \fbfc to other classifiers that
can be trained in a single pass on (i) synthetic data, (ii) OpenML
(binary \& multi-class) classification data
sets~\citep{van2013openml}, and (iii) 4 popular vision data sets --
{\sc Mnist}, {\sc Fashion-Mnist}, {\sc Cifar10}, {\sc
  Cifar100}. Finally, we study the computational scaling of the
parallelized \fbfc training and present some problem insights
generated by a trained \fbfc.  The details on the implementation and
compute resources are in Supplement S3.
\subsection{Dependence on hyper-parameters} \label{sec:emp-eval:hpdep}
We study the effect of the different \fbfc hyper-parameters: (i) the
\fh dimension $m$, (ii) the NNZ per-row $s \ll d$ in $M_m^s$, (iii)
the NNZ $\rho$ in the \fh, and (iv) the \fbf decay rate $c$. We
consider $6$ OpenML data sets (see Table S1 in Supplement S3 for data
details). For every hyper-parameter setting, we compute the $10$-fold
cross-validated classification accuracy ($1 - $ misclassification
rate). We vary each hyper-parameter while fixing the others. The
results for each of the hyper-parameters and data sets are presented
in Figures S1 \& S2 in Supplement S3.1.

The results indicate that, for fixed $\rho$, increasing $m$ usually
improves \fbfc performance up to a point. \fbfc performance is not
affected by $s$ for the high dimensional sets; for the lower
dimensional sets ($d < 20$), the performance improves with increasing
$s$ till around $s \approx 10$, after which, the performance
degrades. Increase in $\rho$ improves \fbfc performance for fixed
values of $m$ and other hyper-parameters. The \fbfc performance is not
affected much by the value of the decay rate when $c < 1$, but there
is a significant drop in performance as we move from $c < 1$
(non-binary \fbf) to $c = 1$ (binary \fbf), indicating the advantage
of our novel non-binary \fbf; this behavior is pretty consistent and
obvious across all data sets. See Supplement S3.1 for further details
and discussion.

\subsection{Comparison to baselines} \label{sec:emp-eval:comp2baselines}
We compare our proposed \fbfc to various baselines. Given the
significant difference between \fbfc with $c = 1$ (binary Bloom
Filter) and \fbfc with $c < 1$, we consider both cases, with $\fbfc^*$
explicitly denoting $c < 1$. We evaluate the proposed schemes and all
the baselines relative to the $k$-nearest-neighbor classifier (\knnc).
We consider a variety of baselines, including ones that can be trained
in a single pass of the training data (similar to \fbfc):
\begin{itemize}
\item {\bf \knnc:} This is the primary baseline. We tune over the
  neighborhood size $k \in [1,64]$.
\item {\bf \ccone:} We consider classification based on a single
  prototype per class -- the geometric center of the class, computed
  with a single pass of the training set.
\item {\bf \cc:} This generalizes \ccone where we utilize multiple
  prototypes per class -- a test point is assigned the label of its
  closest prototype. The per-class prototypes are obtained by
  $k'$-means clustering. We tune over the number of clusters per-class
  $k' \in [1, 64]$. This is {\em not single pass}.
\item {\bf \sbfc:} We utilize \sh~\citep{charikar2002similarity} based
  LSBF for each class in place of \fbf to get the \sh Bloom Filter
  classifier (\sbfc).  We consider this to demonstrate the need for
  the high level of sparsity in \fh; \sh is not inherently as sparse.
  We tune over the \sh projected dimension $m$, considering $m < d$
  (traditional) and $m > d$ (as in \fh).  For the same $m$, \sh is
  more costly than \fh, involving a dense matrix-vector product
  instead of a sparse matrix-vector one.
\item {\bf \lr.} We consider logistic regression trained for a single
  epoch with a stochastic algorithm and tune over $960$
  hyper-parameter configurations for each data set.
\item {\bf \mlpc.} We consider a multi-layer perceptron trained for a
  single epoch with the ``Adam'' solver~\citep{kingma2014adam} and
  tune over $288$ hyper-parameter configurations for each data set.
\end{itemize}
The complete details of the baselines and their hyper-parameters are
in Supplement S3.2.

\paragraph{\fbfc hyper-parameters}
For a data set with $d$ dimensions, we tune across $60$
hyper-parameter settings in the following ranges: $m \in [2d, 2048d]$,
$s \in (0.0, 0.5d]$, $\rho \in [8, 256]$, and $c \in [0.2, 1]$, with
$c = 1$ as binary \fbfc. We use this hyper-parameter search space
for all experiments, except for the vision sets, where we use $m \in
[2d, 1024d]$.

\paragraph{Evaluation metric}
For all methods (baselines and \fbfc), we compute the relative
performance on each data set as $(1 - a_M/a_k)$ where $a_k$ is the
best 10-fold cross-validated classification accuracy achieved by \knnc
and $a_M$ is the best 10-fold cross-validated classification accuracy
obtained by candidate method $M$ across different
hyper-parameters. \knnc has a relative performance of $0$.
\begin{figure}[t]
  \centering
  \begin{subfigure}{0.23\textwidth}
    \centering
    \includegraphics[width=\textwidth]{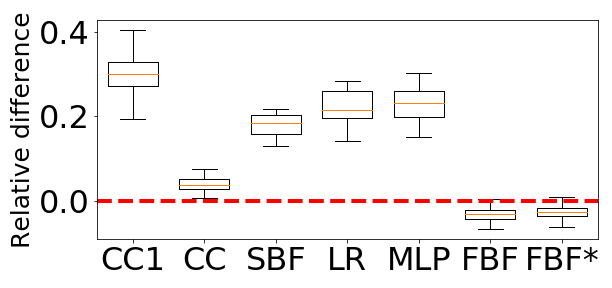}
    \caption{Syn. $\HC^{100}, b = 20$}
    \label{fig:bcs-b100}
  \end{subfigure}
  ~
  \begin{subfigure}{0.23\textwidth}
    \centering
    \includegraphics[width=\textwidth]{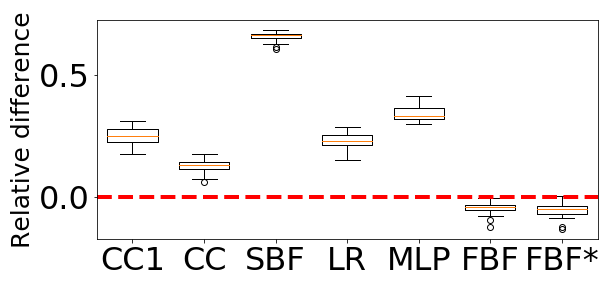}
    \caption{Syn. $\Real^{100}$}
    \label{fig:bcs-r100}
  \end{subfigure}
  ~
  \begin{subfigure}{0.23\textwidth}
    \centering
    \includegraphics[width=\textwidth]{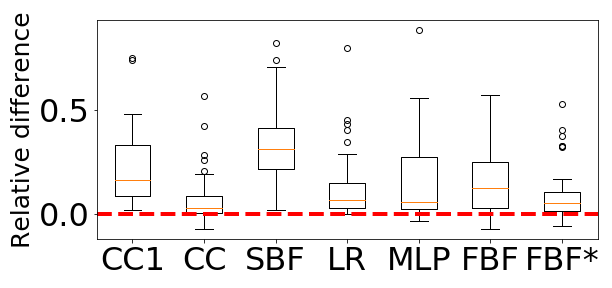}
    \caption{$d \in [10, 100]$.}
    \label{fig:bco-1}
  \end{subfigure}
  ~
  \begin{subfigure}{0.23\textwidth}
    \centering
    \includegraphics[width=\textwidth]{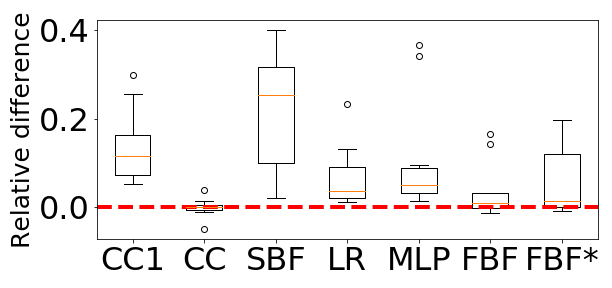}
    \caption{$d \in [101, 1024]$.}
    \label{fig:bco-2}
  \end{subfigure}
  \caption{Performance of \fbfc/$\fbfc^*$ and baselines relative to
    the \knnc performance on {\em synthetic} (\ref{fig:bcs-b100} \&
    \ref{fig:bcs-r100}) and {\em OpenML} data (\ref{fig:bco-1} \&
    \ref{fig:bco-2}). The $10$-fold cross-validated accuracy is
    considered for each of the data sets. The box-plots correspond to
    performance relative to \knnc ({\em lower is better}) aggregated
    over multiple data sets (see text for details). The red dashed
    line denotes \knnc performance.}
  \label{fig:bcomp-all}
\end{figure}

\subsubsection{Synthetic data}

We begin with binary synthetic data of the form considered in our
theoretical results -- points $x \in \HC^d$ with $|x| = b < d$. We
then consider synthetic data in $\Real^d$. We cover different values
of $d$ and $b$ and create a $5$-class classification sets with $3$
modes per class.  For each value of $d$ (and $b$), we create 30 data
sets with $1000$ points each. The aggregate performance of all
baselines (aggregated across all instantiations of $d = 100$ ($b =
20$)) is presented in Figures~\ref{fig:bcs-b100} and
\ref{fig:bcs-r100}. More results on synthetic data sets with different
values of $d$ (and $b$) are presented is Supplement S3.3.

The results indicate that \fbfc and $\fbfc^*$ are able to match \knnc
performance significantly better than all other single pass baselines.
The binary \fbfc matches the performance of $\fbfc^*$ in $\R^d$, but
lags behind on the lower dimensional binary sets.  As expected, \cc
performs significantly better than the other baselines on account of
being able to properly compress multi-modal classes, albeit requiring
multiple passes. \ccone performs significantly worse than \cc since
one cluster is not able to appropriately compress multi-modal classes
while maintaining the separation between the classes. \lr and \mlpc
perform similarly to \ccone. The proposed \fbfc and $\fbfc^*$
significantly outperform \sbfc, highlighting the need for sparse high
dimensional hashes to summarize multi-modal neighborhoods while
avoiding overlap between per-class {\fbf}s.
%
\subsubsection{OpenML data}
We consider classification (binary and multi-class) data sets from
OpenML with numerical columns. We utilize two groups of data sets of
following sizes: (i) $48$ data sets with $d \in [10, 100]$, $n \leq
50000$, and (ii) $10$ data sets with $d \in [101, 1024]$, $n \leq
10000$ (see precise details in Supplement S3.4).  We consider the same
procedure as above of tuning hyper-parameters for the $10$-fold
cross-validated accuracy for all baselines and the proposed scheme
relative to the best \knnc accuracy.  The results, aggregated across
all data sets in the two groups, are summarized in
Figures~\ref{fig:bco-1} and \ref{fig:bco-2}.
%

As with synthetic data, the results indicate that $\fbfc^*$ is able to
match the performance of \knnc for both $d \in [10, 100]$ and $d \in
[100, 1024]$ on a varied set of real data sets, with the binary \fbfc
falling behind on the lower dimensional sets. \fbfc has a median
relative performance of $0.12$ for $d \in [10, 100]$ compared to
$0.05$ for $\fbfc^*$, justifying the novel non-binary \fbf. The binary
\fbfc matches \knnc in higher dimensions -- both \fbfc and $\fbfc^*$
have a median relative performance of around $0.01$. \cc performs best
relative to \knnc overall. Both the proposed schemes are fairly
competitive with the multiple-pass \cc baseline while significantly
outperforming \ccone and \sbfc. \fbfc and $\fbfc^*$ are competitive to
\lr and \mlpc for the lower dimensional sets (relative performance of
$0.07$ and $0.06$ for \lr and \mlpc respectively) while edging ahead
in the higher dimensional sets (relative performance of $0.04$ and
$0.05$ for \lr and \mlpc respectively).
\subsubsection{Vision data}
\begin{wrapfigure}{R}{0.5\textwidth}
    \captionof{table}{Test accuracy (in \%) for vision sets.}
    \label{tab:bcomp-vision}
    \begin{small}
      \begin{sc}
        \begin{tabular}{lcccc}
          \toprule
          Method & Mnist & F-Mnist & Cifar10 & Cifar100 \\
          \midrule
          \knnc     & 97.36 & 85.90 & 31.65 & 14.38 \\
          \ccone    & 82.23 & 70.34 & 24.72 & 7.63  \\
          \cc       & 96.26 & 84.66 & 31.86 & 13.09 \\
          \sbfc     & 13.60 & 26.10 & 11.27 & 1.88  \\
          \lr       & 92.09 & 84.30 & 28.37 & 7.65  \\
          \mlpc     & 96.06 & 84.27 & 28.96 & 7.09  \\
          $\fbfc^*$ & 95.69 & 80.02 & 36.73 & 16.34 \\
          \bottomrule
        \end{tabular}
      \end{sc}
    \end{small}
\end{wrapfigure}
As a final comparison, we consider $4$ popular vision data
sets\footnotemark.
\footnotetext{See Table S1 in Supplement S3 for data details. Note
  that we are not claiming to be competitive with the state-of-the-art
  deep learning classifiers -- we are merely demonstrating the
  capability of our proposed scheme to be competitive to \knnc (and
  other single-pass baselines) on data sets from varied domains.}
In this experiment, we only consider $\fbfc^*$ (omitting \fbfc) and
tune hyper-parameters for all methods with a held-out set and report
the accuracy of the best hyper-parameters on the pre-defined test set
in Table~\ref{tab:bcomp-vision}.
The results indicate that $\fbfc^*$ is competitive to \cc for
\textsc{Mnist}, while outperforming all methods including \knnc
significantly on \textsc{Cifar10} \& \textsc{Cifar100}. With
\textsc{Fashion-Mnist}, \cc, \lr and \mlpc perform competitively to
\knnc while $\fbfc^*$ falls significantly behind.
$\fbfc^*$ significantly outperforms \ccone and \sbfc baselines as in
the previous comparisons.
\subsection{Scaling} \label{sec:emp-eval:scaling}
%
\begin{wrapfigure}{R}{0.5\textwidth}
  \centering
  \includegraphics[width=0.28\textwidth]{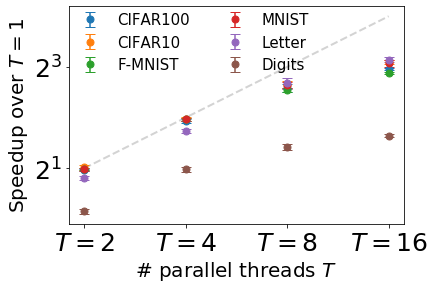}
  \caption{Scaling of parallelized \fbfc training with $T$ threads for
    $T = 1, 2, 4, 8, 16$. The \textcolor{gray}{gray} line corresponds
    to linear scaling. {\em Please view in color.}}
  \label{fig:scaling}
  \vskip -0.1in
\end{wrapfigure}
%
We evaluate the scaling of the parallelized \fbfc training
(Algorithm~\ref{alg:train-infer} (Train\fbfc)) with the number of
parallel threads. For fixed hyper-parameters, we average runtimes (and
speedups) over 10 repetitions for each of the $6$ data sets (see Table
S1 in Supplement S3) and present the results in
Figure~\ref{fig:scaling}.
The results indicate that the parallelized implementation of our
proposed scheme scales very well for up to $8$ threads for the larger
data sets. The parallelism shows significant gains (up to $2 \times$)
even for the tiny \textsc{Digits} data set, demonstrating the
parallelizability of the \fbfc training.
\subsection{Problem insights through class similarities}
\begin{figure}[htb]
  \vskip -0.1in
  \centering
  \begin{subfigure}{0.25\textwidth}
    \centering
    \frame{\includegraphics[width=0.8\textwidth]{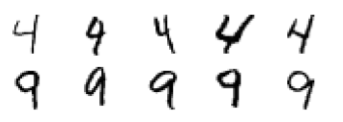}}
    \caption{MN \#1: 4 v 9}
    \label{fig:lsim:mnist1}
  \end{subfigure}
  ~
  \begin{subfigure}{0.25\textwidth}
    \centering
    \includegraphics[width=0.75\textwidth]{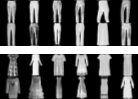}
    \caption{FM \#1: Trouser v Dress}
    \label{fig:lsim:fmnist1}
  \end{subfigure}
  ~
  \begin{subfigure}{0.35\textwidth}
    \centering
    \includegraphics[width=0.5\textwidth]{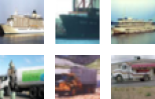}
    \caption{C10 \#1: Ship vs. Truck}
    \label{fig:lsim:cifar10-1}
  \end{subfigure}
  ~
  \begin{subfigure}{0.25\textwidth}
    \centering
    \frame{\includegraphics[width=0.8\textwidth]{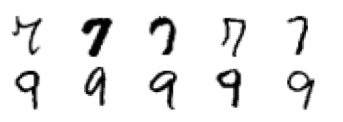}}
    \caption{MN \#2: 7 v 9}
    \label{fig:lsim:mnist2}
  \end{subfigure}
  ~
  \begin{subfigure}{0.25\textwidth}
    \centering
    \includegraphics[width=0.75\textwidth]{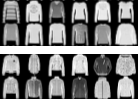}
    \caption{FM \#2: Pullover v Coat}
    \label{fig:lsim:fmnist2}
  \end{subfigure}
  ~
  \begin{subfigure}{0.35\textwidth}
    \centering
    \includegraphics[width=0.5\textwidth]{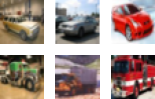}
    \caption{C10 \#2: Automobile vs. Truck}
    \label{fig:lsim:cifar10-2}
  \end{subfigure}
  \caption{Label pairs for MNIST (MN), Fashion-MNIST (FM) and CIFAR10
    (C10) with highest \fbf similarities.}
  \label{fig:label-sim}
  \vskip -0.1in
\end{figure}
%
We consider some of the vision data sets and explore the inter-class
similarities for the problems. For each data set, we report the top
$2$ most similar class pairs based on their respective trained \fbfc
in Figure~\ref{fig:label-sim}. For MNIST, the most similar pairs of
digits are $(4,9)$ and $(7,9)$. This is somewhat validated by the
images where these pairs are digits are visually hard to
distinguish. In Fashion-MNIST, the hard pairs are (trouser, dress) and
(pullover, coats). Trousers have the same long structure as dresses,
and pullovers have the same structure of a top with two long arm
sleeves. For CIFAR10, the most similar label pairs as per the \fbfc
class similarities are ``ship'' vs. ``truck'' and ``automobile''
vs. ``truck''. Both ship and truck images usually have pictures of
containers; trucks and automobiles are images of vehicles with
headlights, wheels and such.  The class similarities generated by
\fbfc seem reasonable for these data sets, implying that we can use
this scheme to estimate class similarities in other problems where the
class labels are not interpretable and there are no inter-class
hierarchies.


%
\section{Conclusions and future work} \label{sec:conc}
In this paper we proposed a novel neurosciene inspired Fly Bloom
Filter based classifier (\fbfc) that can be trained in an
\emph{embarrassingly parallelized} fashion in a single pass of the
training set -- a point never needs to be revisited, and the whole
training data does not need to be in memory. The inference requires an
efficient \fh followed by a very sparse dot product.  On the
theoretical side, we established conditions under which \fbfc agrees
with the nearest-neighbor classifier.  We empirically validated our
proposed scheme with over 50 data sets of varied data dimensionality
and demonstrated that the predictive performance of our proposed
classifier is competitive the the $k$-nearest-neighbor classifier and
other single-pass classifiers.

In the future we will pursue theoretical guarantees for \fbfc and
$\fbfc^*$ for general data in $\Real^d$ by exploring other data
dependent assumptions such as doubling measure. Utilizing the sparse
and randomized nature of \fbfc, we will also investigate differential
privacy preserving properties of \fbfc as well as robustness of \fbfc
to benign and adversarial perturbations.

\bibliography{references}
\bibliographystyle{unsrtnat}

\appendix
\renewcommand{\thesection}{S\arabic{section}}
\renewcommand{\thetable}{S\arabic{table}}
\renewcommand{\thefigure}{S\arabic{figure}}
\renewcommand{\theequation}{S\arabic{equation}}
\renewcommand{\thetheorem}{S\arabic{theorem}}

\setcounter{table}{0}
\setcounter{figure}{0}
\setcounter{equation}{0}
\setcounter{theorem}{0}
\setcounter{section}{0}

\section{Discussion on non-binary \fbfc} \label{asec:non-binary-fbfc}

Note that the $j^{\mbox{th}}$ coordinate of any $\fbf^*$ $w_i$ diminishes as the number of training examples $x'$ with label $i$ and nonzero $j^{\mbox{th}}$ coordinate in their \fh $h(x')$ increases.
In fact, we can control the number of data points that can affect the value of $w_{ij}$. To see this, choose any small $\epsilon>0$ such that if $w_{ij}\leq \epsilon$, then we can effectively assume $w_{ij}\approx 0$. Suppose $t=|\{(x',y')\in S ~:~ y'=i \mbox { and } (h(x'))_j=1\}|$. Then it is easy to see that,
$$
w_{ij}=(1-c)^t\leq e^{-ct}\leq\epsilon\Rightarrow t\geq\frac{1}{c}\ln(1/\epsilon)
$$
That means even if the set $|\{(x',y')\in S ~:~ y'=i \mbox { and } (h(x'))_j=1\}|$ may contain $t'>t$ data points, only $t$ of them control the value of $w_{ij}$. More importantly, (i) $t$ can be controlled by choosing appropriate $c$, and (ii) using the similarity preservation of the projection matrix $M_m^s$, any test point $x$ with $(h(x))_j=1$ will be close to those $t$ data point with high probability.
%
\section{Supplementary material from Section 4}

Stating \fh definition for completeness:

The basic building block of our proposed algorithm is a fruit-fly olfactory circuit inspired \fh function, first introduced by \citet{dasgupta2017neural}. For $x \in \Real^d$, the \fh function $h \colon \Real^d \to \HC^{m}$ is defined as,
\begin{equation} \label{aeq:flyhash}
    h(x) = \Gamma_\rho (M_{m}^s x),
\end{equation}
where $M_m^s \in \HC^{m \times d}$ is the randomized sparse lifting binary matrix with $s\ll d$ nonzero entries in each row, and $\Gamma_\rho \colon \Real^{m} \to \HC^{m}$ is the winner-take-all function converting a vector in $\Real^m$ to one in $\HC^m$ by setting the highest $\rho$ elements to $1$ and the rest to zero. For ease of notation, we use $M$ instead of $M_m^s$.

\subsection{Proof of Lemma 1}
Stating Lemma 1 for completeness:
\begin{lemma}\label{alem:expectation}
Fix any $x\in\R^d$ and let  $h(x)\in\{0,1\}^m$ be its \fh using equation~\ref{aeq:flyhash}. Let $x_{NN}^{i}=\argmin_{(x',y')\in S^{i}}\|x-x'\|$ for $i\in\{0,1\}$, 
where $\|\cdot\|$ is any distance metric. Let $A_{S^1}=\{\theta: \cap_{(x',y')\in S^{1}} ~ \theta^{\top} x'<\tau_{x'}(\rho/m)\}$ and $A_{S^0}=\{\theta: \cap_{(x',y')\in S^{0}} ~ \theta^{\top} x'<\tau_{x'}(\rho/m)\}$. Then the following holds, where the expectation is taken over the random choice of projection matrix $M$.\\
(i) $\E_M(\frac{w_{1}^{\top} h(x)}{\rho})=\pr_{\theta\sim Q}\left(A_{S^1} ~ | ~ \theta^{\top}x\geq \tau_x(\rho/m\right)$\\
(ii) $\E_M(\frac{w_{0}^{\top}h(x)}{\rho})=\pr_{\theta\sim Q}\left(A_{S^0} ~ | ~ \theta^{\top}x\geq \tau_x(\rho/m)\right)$\\
(iii) $\E_M(\frac{w_{1}^{\top} h(x)}{\rho})\geq 1-\sum_{x'\in S^{1}}q(x,x')$\\
(iv) $\E_M(\frac{w_{1}^{\top} h(x)}{\rho})\leq 1-q(x,x_{NN}^1)$\\
(v) $\E_M(\frac{w_{0}^{\top} h(x)}{\rho})\geq 1-\sum_{x'\in S^{0}}q(x,x')$\\
(vi) $\E_M(\frac{w_{0}^{\top} h(x)}{\rho})\leq 1-q(x,x_{NN}^0)$
\end{lemma}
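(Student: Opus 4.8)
The plan is to exploit the i.i.d. structure of the rows of $M$ to collapse each expectation into a single-row probability, and then obtain the four bounds by a union bound (lower bounds (iii), (v)) and by event inclusion (upper bounds (iv), (vi)). Throughout, $x_{NN}^i$ plays no special role in the argument beyond being \emph{some} element of $S^i$, so (iv) and (vi) in fact hold for every training point of the respective class; choosing the nearest neighbor is what makes the bound tightest.

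First I would translate the two random objects into indicators of a single row. Writing $\theta_j$ for the $j$-th row of $M$, the winner-take-all rule gives $(h(x))_j = \mathbf{1}\{\theta_j^\top x \geq \tau_x(\rho/m)\}$, and the $\fbf$ construction in Equation~\eqref{eq:fbf_rewrite} gives $(w_i)_j = \prod_{(x',y')\in S^i}\mathbf{1}\{\theta_j^\top x' < \tau_{x'}(\rho/m)\} = \mathbf{1}\{\theta_j \in A_{S^i}\}$. Both factors of the product $(w_i)_j\,(h(x))_j$ depend only on $\theta_j$, and since the rows are drawn i.i.d.\ from $Q$, each coordinate contributes the identical expectation. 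Hence by linearity of expectation,
$$
\E_M\!\left[\frac{w_i^\top h(x)}{\rho}\right] = \frac{m}{\rho}\,\E_{\theta\sim Q}\!\left[\mathbf{1}\{\theta^\top x\geq\tau_x(\rho/m)\}\cdot\mathbf{1}\{\theta\in A_{S^i}\}\right].
$$
I would then factor the joint probability as $\pr_{\theta\sim Q}(A_{S^i}\mid \theta^\top x\geq\tau_x(\rho/m))\cdot\pr_{\theta\sim Q}(\theta^\top x\geq\tau_x(\rho/m))$ and substitute the normalization $\pr_{\theta\sim Q}(\theta^\top x\geq\tau_x(\rho/m)) = \rho/m$ from the definition of $\tau_x$. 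The prefactor $m/\rho$ cancels, yielding parts (i) and (ii) verbatim.

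For the lower bounds (iii) and (v), I would pass to the complement $\overline{A_{S^i}} = \{\theta : \exists\,(x',y')\in S^i,\ \theta^\top x'\geq\tau_{x'}(\rho/m)\}$ and apply a union bound over $x'\in S^i$, all conditioned on $\theta^\top x\geq\tau_x(\rho/m)$; each resulting term is exactly $q(x,x')$ by its definition, so $\pr_{\theta\sim Q}(A_{S^i}\mid\cdot) \geq 1 - \sum_{x'\in S^i} q(x,x')$. For the upper bounds (iv) and (vi), I would observe that the intersection event $A_{S^i}$ in particular forces the single constraint $\theta^\top x_{NN}^i < \tau_{x_{NN}^i}(\rho/m)$, hence $A_{S^i}$ is contained in that single-constraint event; monotonicity of conditional probability then gives $\pr_{\theta\sim Q}(A_{S^i}\mid\cdot)\leq 1 - q(x, x_{NN}^i)$.

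I do not anticipate a genuine obstacle here; the lemma is essentially a bookkeeping exercise once the right viewpoint is fixed. The one step deserving care is the coordinate-wise reduction: I must confirm that the joint law of the pair $((w_i)_j,(h(x))_j)$ is the same for every $j$ and is determined entirely by the single row $\theta_j$, which is exactly where the independence and identical distribution of the rows of $M$ is used. With that justified, parts (i)--(ii) are an identity and (iii)--(vi) follow immediately from the union bound and event inclusion.
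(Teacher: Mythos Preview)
Your proposal is correct and is essentially the same argument the paper relies on: the paper's proof simply invokes Lemmas~2 and~3 of \cite{dasgupta2018neural} applied to the class-specific filters, and what you have written is precisely the content of those lemmas unpacked---the coordinate-wise reduction via i.i.d.\ rows and the normalization $\pr_{\theta\sim Q}(\theta^\top x\geq\tau_x(\rho/m))=\rho/m$ for (i)--(ii), then the union bound and event inclusion for (iii)--(vi). Your version is self-contained rather than deferring to the cited reference, but the underlying steps are identical.
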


\begin{proof}
Part (i) and (ii) follows from simple application of Lemma 2 of~\cite{dasgupta2018neural} to class specific {\fbf}s. Part (iii) and (v) follows from simple application of Lemma 3 of~\cite{dasgupta2018neural} to class specific {\fbf}s. For part (iv), simple application of Lemma 3 of~\cite{dasgupta2018neural} to \fbf $w_{1}$ ensures  that for any $x'\in S^{1}, \E_M(\frac{w_{1}^{\top} h(x)}{\rho})\leq 1-q(x,x')$. Clearly, $\E_M(\frac{w_{1}^{\top} h(x)}{\rho})\leq 1-q(x,x_{NN}^1)$. Applying similar argument, part (vi) also holds.
\end{proof}

\subsection{Proof of Theorem 3}
We analyze classification performance of \fbfc trained on a training set $S=\{(x_i,y_i)\}_{i=1}^{n_0+n_1}\subset \mathcal{X} \times  \{0,1\}$, where  $S=S^{1}\cup S^{0}$, $S^{0}\subset S$ with label 0 and $S^{1}\subset S$ with label 1, satisfying $|S^{0}|=n_0$ and $|S^{1}|=n_1$ and $n=\max\{n_0,n_1\}$. For appropriate choice of $m$, let $w_{0}, w_{1}\in\{0,1\}^m$ be the $\fbf$s constructed using $S^{0}$ and $S_1$ respectively. In Theorem 3, we consider a special case where examples from each class have binary feature vectors with fixed number of ones. In particular, $\mathcal{X}=\mathcal{X}_b=\{x\in\HC^d : |x|_1 = b < d\}$.

Restating Theorem 3 for completeness:

\begin{theorem}\label{ath:binary_new}
Let $S$ be a training set as given above. Fix any $\delta\in(0,1)$, and set $\rho\geq\frac{12}{\mu}\ln(4/\delta)$, $m\geq (d/b)n\rho$, and $s=\log_{d/b}(m/\rho)$, where $\mu=\min\left\{\E_M\left(\frac{w_0^{\top} h(x)}{\rho}\right),\E_M\left(\frac{w_1^{\top} h(x)}{\rho}\right)\right\}$ and $h(x)$ is the \fh function from equation~\ref{aeq:flyhash}. For any test example $x\in\mathcal{X}$, let its closest point from $S$ measured using $\ell_1$ metric be $x_{NN}$, having label $y_{NN}\in\{0,1\}$, satisfies, (i)  $\|x-x_{NN}\|_1\leq\frac{2b(1-b/d)}{3s}$, and  (ii) $\|x-x_i\|_1\geq 2b(1-b/d)$ for all $(x_i,y_i)\in S$, with $y_i\neq y_{NN}$. Let  $w_{0}, w_{1}\in\{0,1\}^m$ be the {\fbf}s constructed using $S^{0}$ and $S^{1}$ respectively. Then, with probability at least $1-\delta$ (over the random choice of projection matrix $M$),  prediction of \fbfc on $x$ agrees with the prediction of 1-NN classifier on $x$.
\end{theorem}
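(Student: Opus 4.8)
\emph{Proof plan.} The plan is to exploit the rigid geometry of $\mathcal{X}_b$ to reduce the winner-take-all threshold to a single combinatorial condition, derive a \emph{closed form} for $q(x,x')$, feed it into Lemma~\ref{alem:expectation} to separate the two expected novelty scores, and close with a multiplicative Chernoff bound. Throughout I write $f_i = (w_i^{\top}h(x))/\rho$ and assume without loss of generality $y_{NN}=1$.

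First I would collapse the threshold. Since every $x\in\mathcal{X}_b$ has $|x|_1=b$, the score $\theta^{\top}x$ is the hypergeometric overlap of $\mathrm{supp}(\theta)$ (size $s$) with $\mathrm{supp}(x)$ (size $b$); its law is independent of $x$, so the top-$(\rho/m)$-fractile threshold $\tau_x(\rho/m)$ is one common value $\tau$. With $s=\log_{d/b}(m/\rho)$ one has $(b/d)^s=\rho/m$, and because $\pr_{\theta\sim Q}(\theta^{\top}x=s)=\binom{b}{s}/\binom{d}{s}\approx (b/d)^s=\rho/m$, the threshold collapses to $\tau=s$: a bit fires, $(h(x))_j=1$, exactly when $\mathrm{supp}(\theta_j)\subseteq\mathrm{supp}(x)$. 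Conditioning on this event makes the $s$ ones of $\theta$ uniform over $\mathrm{supp}(x)$, so writing $a=|\mathrm{supp}(x)\cap\mathrm{supp}(x')|=b-\tfrac12\|x-x'\|_1$ I obtain the closed form
\[
q(x,x')=\frac{\binom{a}{s}}{\binom{b}{s}}=\prod_{i=0}^{s-1}\frac{a-i}{b-i}.
\]

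Next I would bound $q$ on both sides. For the near neighbour, constraint (i) gives $b-a\leq b(1-b/d)/(3s)$; bounding each factor below by $1-(b-a)/(b-s+1)$ and applying Bernoulli's inequality yields $q(x,x_{NN})\geq 1-\tfrac{1-b/d}{3}=\tfrac{2+b/d}{3}$ for $s\ll b$. For any opposite-class $x'$, constraint (ii) gives $a\leq b^2/d$, hence $q(x,x')\leq (a/b)^s\leq (b/d)^s=\rho/m$, so that $\sum_{x'\in S^{0}}q(x,x')\leq n\rho/m\leq b/d$ using $m\geq(d/b)n\rho$. Combining these with parts (iv) and (v) of Lemma~\ref{alem:expectation} gives
\[
\E_M(f_1)\leq 1-q(x,x_{NN})\leq \tfrac{1-b/d}{3},\qquad
\E_M(f_0)\geq 1-\textstyle\sum_{x'\in S^{0}}q(x,x')\geq 1-\tfrac{b}{d}.
\]
Since $b<d$ this is a \emph{multiplicative} gap $\E_M(f_0)\geq 3\,\E_M(f_1)$ (equivalently an additive gap $\tfrac{2}{3}(1-b/d)>0$), so \fbfc agrees with \onennc in expectation.

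Finally I would concentrate. In the threshold formulation the rows of $M$ are independent, so $w_i^{\top}h(x)=\sum_{j=1}^m (h(x))_j\,w_{ij}$ is a sum of i.i.d.\ $\{0,1\}$ variables with mean $\rho\,\E_M(f_i)$, and $\mu=\E_M(f_1)$ here. A multiplicative Chernoff bound with $\varepsilon=\tfrac12$ gives $\pr(f_1\geq \tfrac32\E_M(f_1))\leq e^{-\rho\mu/12}$ and $\pr(f_0\leq \tfrac12\E_M(f_0))\leq e^{-\rho\mu/8}$; the choice $\rho\geq\tfrac{12}{\mu}\ln(4/\delta)$ drives each below $\delta/4$. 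On the complementary event a union bound and the factor-$3$ gap (which guarantees $\tfrac32\E_M(f_1)\leq\tfrac12\E_M(f_0)$) place $f_1<\tfrac12\E_M(f_0)<f_0$, so \fbfc agrees with \onennc with probability at least $1-\delta$. I expect the main obstacle to be the first two steps: pinning the winner-take-all threshold to exact ``full containment'' ($\tau=s$) and the attendant closed form for $q$, while controlling the hypergeometric-versus-$(b/d)^s$ discretization (the slack flagged in the preliminaries) and the $s\ll b$ regime; once $q$ is in closed form, the expectation separation and the concentration are routine.
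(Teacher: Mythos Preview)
Your proposal is correct and follows essentially the same route as the paper: collapse the winner-take-all threshold to the combinatorial event $\mathrm{supp}(\theta)\subseteq\mathrm{supp}(x)$ (so $\tau=s$ and $(b/d)^s=\rho/m$), obtain a closed form for $q(x,x')$ in terms of the overlap $x^{\top}x'$, feed conditions (i) and (ii) into parts (iv) and (v) of Lemma~\ref{alem:expectation} to get $\E_M(f_1)\leq\tfrac{1-b/d}{3}$ and $\E_M(f_0)\geq 1-\tfrac{b}{d}$, and finish with a multiplicative Chernoff bound at $\varepsilon=\tfrac12$. The only packaging differences are that the paper quotes the approximation $q(x,x')\approx(x^{\top}x'/b)^s$ from prior work instead of writing the exact hypergeometric ratio, and splits the expectation separation and the concentration into two auxiliary lemmas; your explicit factor-$3$ gap $\E_M(f_0)\geq 3\,\E_M(f_1)$ is exactly what makes $\tfrac32\E_M(f_1)\leq\tfrac12\E_M(f_0)$ go through, and the discretization caveat you flag is the same one the paper waves through by declaring the approximations to be equalities.
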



\begin{proof}
We first show that a result similar to the one we wish to prove holds in expectation (for exact statement, please  see Lemma~\ref{lem:binary_new} below). Using this result and standard concentration results presented in lemma~\ref{lem:concentration}, we show that the desired result holds with high probability, provided $\rho$ is large.

Using Lemma~\ref{lem:binary_new}, we show that $\E_M\left(\frac{w_1^{\top}h(x)}{\rho}\right)\leq s\epsilon$ and $\E_M\left(\frac{w_0^{\top}h(x)}{\rho}\right)\geq 1-\frac{b}{d}$. Therefore, if $\epsilon$ is restricted in the range $\left(0,\frac{(1-b/d)}{(\log_{d/b} (m/\rho))}\right)$, then $\E_M\left(\frac{w_1^{\top}h(x)}{\rho}\right)< \E_M\left(\frac{w_0^{\top}h(x)}{\rho}\right)$ which ensures that prediction of \fbfc on $x$ agrees with prediction of 1-NN classifier on $x$ in expectation. Now, using Lemma~\ref{lem:concentration}, with probability at least $1-\delta$, we have, $\frac{w_1^{\top} h(x)}{\rho}\leq \frac{3}{2}\E_M\left(\frac{w_1^{\top} h(x)}{\rho}\right)\leq \frac{3s\epsilon}{2}$ and $\frac{w_0^{\top} h(x)}{\rho}\geq \frac{1}{2}\E_M\left(\frac{w_0^{\top} h(x)}{\rho}\right)\geq \frac{1}{2}\left(1-\frac{b}{d}\right)$. Restricting $\epsilon$ in the range $\left(0,\frac{(1-b/d)}{3\log_{d/b} (m/\rho)}\right)$, ensures that $\|x-x_{NN}\|_1=2b\epsilon\leq \frac{2(1-b/d)}{3\log_{d/b} (m/\rho)}$, and with probability at least $1-\delta$,  $\frac{w_0^{\top} h(x)}{\rho} < \frac{w_1^{\top} h(x)}{\rho}$. The result follows.
\end{proof}

\begin{lemma}\label{lem:binary_new}
Let $S$ be a training set as given above.
For any test example $x\in\mathcal{X}$, let its closest point from $S$ measured using $\ell_1$ metric be $x_{NN}$ having label $y_{NN}\in\{0,1\}$. Assume that for all $(x_{i},y_{i})\in S$, with $y_{i}\neq y_{NN}$, $\|x-x_{i}\|_1\geq 2b(1-b/d)$ and $x_{NN}$ satisfies $\|x-x_{NN}\|_1\leq \frac{2b(1-b)}{\log_{d/b} (m/\rho)}$, where $m\geq (d/b)n\rho$. Let $s=\log_{d/b} (m/\rho)$ and  $w_{0}, w_{1}\in\{0,1\}^m$ be the {\fbf}s constructed using $S^{0}$ and $S^{1}$ respectively. Then, in expectation (over the random choice of projection matrix $M$),  prediction of \fbfc on $x$ agrees with the prediction of 1-NN classifier on $x$.
\end{lemma}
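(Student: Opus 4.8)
The plan is to exploit the combinatorial structure of $\mathcal{X}_b$ to obtain a closed form for $q(x,x')$ and then substitute it into parts (iv) and (v) of Lemma~\ref{alem:expectation}. The key structural fact is that for $x\in\mathcal{X}_b$ and $\theta\sim Q$ (uniform over $s$-sparse binary vectors), the score $\theta^{\top}x=|\mathrm{supp}(\theta)\cap\mathrm{supp}(x)|$ is hypergeometric with a law depending only on $b,s,d$, and not on the particular $x$. Hence the threshold $\tau_x(\rho/m)$ is the \emph{same} constant for every $x\in\mathcal{X}_b$, placing us in the ``identical threshold'' regime anticipated in the proof sketch of Theorem~\ref{th:general}.

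First I would identify this common threshold. Because $s=\log_{d/b}(m/\rho)$ is equivalent to $(b/d)^s=\rho/m$, and $\pr_{\theta\sim Q}(\theta^{\top}x=s)=\binom{b}{s}/\binom{d}{s}\approx(b/d)^s=\rho/m$, the top-$(\rho/m)$ fractile corresponds to $\theta^{\top}x=s$; that is, a hash bit fires exactly when $\mathrm{supp}(\theta)\subseteq\mathrm{supp}(x)$. Conditioned on this event, $\mathrm{supp}(\theta)$ is a uniform $s$-subset of the $b$-element set $\mathrm{supp}(x)$, so writing $a=|x\wedge x'|=b-\tfrac12\|x-x'\|_1$ I would obtain the closed form
\begin{equation}
q(x,x')=\frac{\binom{a}{s}}{\binom{b}{s}},
\end{equation}
the probability that such a subset lands entirely inside $\mathrm{supp}(x)\cap\mathrm{supp}(x')$.

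The two required bounds then follow by elementary hypergeometric estimates. Writing $\|x-x_{NN}\|_1=2b\epsilon$ for the (label-$1$, WLOG) nearest neighbor, $q(x,x_{NN})$ is the probability that the random $s$-subset avoids the $b\epsilon$ coordinates of $\mathrm{supp}(x)$ on which $x$ and $x_{NN}$ differ; a first-moment bound (expected number hit is $s\cdot\tfrac{b\epsilon}{b}=s\epsilon$) gives $q(x,x_{NN})\geq 1-s\epsilon$, so part (iv) yields $\E_M(w_1^{\top}h(x)/\rho)\leq 1-q(x,x_{NN})\leq s\epsilon$. For the opposing class, hypothesis (ii) forces $\|x-x'\|_1\geq 2b(1-b/d)$ for every $x'\in S^0$, hence $a\leq b^2/d$ and $q(x,x')\leq(a/b)^s\leq(b/d)^s=\rho/m$; summing over the $n_0\leq n$ points of $S^0$ and using $m\geq(d/b)n\rho$ gives $\sum_{x'\in S^0}q(x,x')\leq n\rho/m\leq b/d$, so part (v) yields $\E_M(w_0^{\top}h(x)/\rho)\geq 1-b/d$. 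Finally the distance hypothesis $\|x-x_{NN}\|_1\leq 2b(1-b/d)/s$ reads $s\epsilon\leq 1-b/d$, whence $\E_M(w_1^{\top}h(x)/\rho)\leq\E_M(w_0^{\top}h(x)/\rho)$ and \fbfc agrees with \onennc in expectation.

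I expect the main obstacle to be rigorously controlling the discretization hidden in the ``$\approx$'' step that pins the winner-take-all threshold to exact support containment: the clean identity $\pr(\theta^{\top}x\geq\tau)=\rho/m$ relies on the integrality of $\theta^{\top}x$ cooperating with the choice $(b/d)^s=\rho/m$, and if the fractile does not land precisely at $\theta^{\top}x=s$ one must show the nearest admissible integer threshold still gives firing probability $\Theta(\rho/m)$ and propagate the resulting constants through both bounds. The upper and lower hypergeometric estimates ($\binom{a}{s}/\binom{b}{s}\leq(a/b)^s$ and the first-moment bound $\geq 1-s\epsilon$) and the final comparison are routine once the constant threshold and the closed form are in place.
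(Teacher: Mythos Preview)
Your proposal is correct and follows essentially the same route as the paper: set the common threshold at $\theta^{\top}x=s$ via $(b/d)^s=\rho/m$, compute $q(x,x')$ in closed form, then plug into parts (iv) and (v) of Lemma~\ref{alem:expectation} to get $\E_M(w_1^{\top}h(x)/\rho)\leq s\epsilon$ and $\E_M(w_0^{\top}h(x)/\rho)\geq 1-b/d$, and compare. The only cosmetic difference is that the paper cites the approximation $q(x,x')\approx(x^{\top}x'/b)^s$ from \cite{dasgupta2018neural} and bounds it via $(1-\epsilon)^s\geq 1-s\epsilon$, whereas you derive the exact hypergeometric ratio $\binom{a}{s}/\binom{b}{s}$ and bound it directly; your explicit acknowledgment of the discretization issue is also something the paper handles only by the standing ``$\approx$ as equality'' convention.
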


\begin{proof}
Without loss of generality, assume assume that $x_{NN}$ satisfies the relation $\|x- x_{NN}\|_1 =2b\epsilon$ for some $0<\epsilon<1$ and $y_{NN}=1$. Clearly, 1-NN classifier will predict $x$'s class label to be 1.

Let $h(x)\in\{0,1\}^m$ be the \fh function from equation~\ref{aeq:flyhash}. To ensure that prediction of \fbfc on $x$ agrees with that of 1-NN classifier on expectation, we need to show that $\E_M\left(\frac{w_1^{\top} h(x)}{\rho}\right)< \E_M\left(\frac{w_0^{\top} h(x)}{\rho}\right)$. Our plan is to show that upper bound of $\E_M\left(\frac{w_1^{\top} h(x)}{\rho}\right)$ is strictly smaller then lower bound of $\E_M\left(\frac{w_0^{\top} h(x)}{\rho}\right)$. Towards this end, for any $x\in\mathcal{X}_b$, set the threshold $\tau_x(k/m)$ to be $s$, whose value will be chosen later. Then we have,
\begin{eqnarray*}
\pr_{\theta\sim Q}(\theta\cdot x\geq\tau_x(\rho/m))&=&\pr_{\theta\sim Q}(\theta\cdot x\geq s)\\
&=&\pr_{\theta\sim Q}(\theta\cdot x=s)\\
&=&\frac{\binom{b}{s}}{\binom{d}{s}}\approx \left(\frac{b}{d}\right)^{s}
\end{eqnarray*}
where the second inequality follows from the fact that $\theta$ has exactly $s$ ones and maximum value of $\theta^{\top} x$ is $s$.
Since $\pr_{\theta\sim Q)}(\theta^{\top} x\geq\tau_x(\rho/m))=\rho/m$, we have $s\approx \frac{\log(m/\rho)}{\log(d/b)}=\log_{d/b}(m/\rho)$.  Additionally, from Lemma 6 of~\cite{dasgupta2018neural} we have ,
\begin{equation}\label{eq:binary_q}
    q(x,x')\approx\left(\frac{x^{\top} x'}{b}\right)^{s}
\end{equation}
This approximation is excellent when $c$ is small relative to $x\cdot x'$. We will henceforth take it to be equality.
 It is easy to check that for any $x,x'\in\mathcal{X}_b, \|x-x'\|_1=2(b-x\cdot x')$. Therefore, $\|x-x_{NN}\|=2b\epsilon$ implies $x^{\top} x_{NN}=b(1-\epsilon)$ and for all $(x',y')\in S^0, \|x-x'\|_1\geq2b(1-b/d)$ implies $x^{\top} x'\leq (b/d)b$. Therefore, using equation~\ref{eq:binary_q}, we have $q(x,x_{NN})=\left(\frac{x^{\top} x_{NN}}{b}\right)^{s}= (1-\epsilon)^{s}\geq 1-s\epsilon$. Combining this with part (iv) of Lemma~\ref{alem:expectation}, we have $\E_M\left(\frac{w_{1}^{\top} h(x)}{\rho}\right)\leq 1-q(x,x_{NN})\leq 1-(1-s\epsilon)=s\epsilon$. Since for each $(x',y')\in S^0$, $x^{\top} x'\leq (b/d)b$, we have $q(x,x')=\left(\frac{x\cdot x'}{b}\right)^{s}\leq \left(\frac{b}{d}\right)^{s}=\rho/m$. Combining this with part (v) of Lemma~\ref{alem:expectation}, we have $\E_M\left(\frac{w_{0}^{\top} h(x)}{\rho}\right)\geq 1-\sum_{(x',y')\in S^0}q(x,x')\geq 1-\frac{n_0\rho}{m}\geq 1-b/d$. To ensure that the lower bound of $\E_M\left(\frac{w_{0}^{\top} h(x)}{\rho}\right)$ is strictly larger than upper bound of  $\E_M\left(\frac{w_{1}^{\top} h(x)}{\rho}\right)$, we need, $s\epsilon<(1-b/d)\Rightarrow \epsilon<\frac{(1-b/d)}{s}=\frac{(1-b/d)}{\log_{d/b} (m/\rho)}$, which ensures $\|x-x_{NN}\|_1=2b\epsilon\leq\frac{2b(1-b/d)}{\log_{d/b} (m/\rho)}$.

 Since for any test data point $x$, its closet point in $S$ can also have label 0, we simply replace $n_0$ by $n=\max\{n_0,n_1\}$.
\end{proof}

\subsection{Auxiliary Lemma and its proof}
The following concentration result is standard and a similar form has appeared in~\cite{dasgupta2018neural}.
\begin{lemma}\label{lem:concentration}
Let $x_{1},\ldots, x_{n_1}\in\mathcal{X}_b$ be the unlabeled examples of $S^1$ and let $\tilde{x}_1,\ldots,\tilde{x}_{n_0}\in\mathcal{X}_b$ be the unlabeled examples of $S^0$ from Lemma~\ref{lem:binary_new}. Pick any $\delta\in(0,1)$ and $x\in \mathcal{X}_b$. With probability at least $1-\delta$ over the choice of random projection matrix $M$, the following holds,\\
(i) $\frac{1}{2}\E_M\left(\frac{w_1^{\top} h(x)}{\rho}\right)\leq \frac{w_1^{\top} h(x)}{\rho}\leq \frac{3}{2}\E_M\left(\frac{w_1^{\top} h(x)}{\rho}\right)$ \\
(ii) $\frac{1}{2}\E_M\left(\frac{w_0^{\top} h(x)}{\rho}\right)\leq \frac{w_0^{\top} h(x)}{\rho}\leq \frac{3}{2}\E_M\left(\frac{w_0^{\top} h(x)}{\rho}\right)$\\
provided $\rho\cdot\min\left\{\E_M\left(\frac{w_0^{\top} h(x)}{\rho}\right),\E_M\left(\frac{w_1^{\top} h(x)}{\rho}\right)\right\}\geq 12\ln(4/\delta)$.
\end{lemma}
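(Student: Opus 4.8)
The plan is to recognize $w_1^\top h(x)$ as a sum of $m$ independent Bernoulli indicators, apply a multiplicative Chernoff bound to obtain each two-sided inequality, run the identical argument for $w_0^\top h(x)$, and close with a union bound that reproduces the stated condition on $\rho\mu$. Everything reduces to a clean sum-of-i.i.d.\ computation once the right decomposition is set up.

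First I would write $w_1^\top h(x)=\sum_{j=1}^m Z_j$, where $Z_j=(w_1)_j\,(h(x))_j\in\{0,1\}$ indicates that coordinate $j$ contributes to the familiarity score. Recall $(h(x))_j=1$ iff $\theta_j^\top x\geq\tau_x(\rho/m)$, and by construction $(w_1)_j=1$ iff $\theta_j^\top x'<\tau_{x'}(\rho/m)$ for every $x'\in S^1$; hence $Z_j=1$ iff $\theta_j^\top x\geq\tau_x(\rho/m)$ and $\theta_j$ falls in the single-row event defining $A_{S^1}$. The key observation is that, under the threshold formulation of winner-take-all adopted in the Preliminaries, $Z_j$ is a deterministic function of the single row $\theta_j$ alone, and since the rows $\theta_1,\dots,\theta_m$ of $M$ are drawn i.i.d.\ from $Q$, the variables $Z_1,\dots,Z_m$ are i.i.d.\ Bernoulli. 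This is the one modeling subtlety I would flag: independence hinges on the per-coordinate thresholded formulation rather than an exactly-top-$\rho$ winner-take-all, which would instead induce negative correlation across coordinates and force a different (e.g.\ negative-association) argument.

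Next I would pin down the mean. By the threshold normalization $\pr_{\theta\sim Q}(\theta^\top x\geq\tau_x(\rho/m))=\rho/m$, so $\E[Z_j]=\pr_{\theta\sim Q}(A_{S^1}\mid\theta^\top x\geq\tau_x(\rho/m))\cdot(\rho/m)$, and by Lemma~\ref{alem:expectation}(i) this equals $\mu_1\rho/m$ with $\mu_1:=\E_M(w_1^\top h(x)/\rho)$. Summing over $j$ gives $\E[\sum_j Z_j]=\rho\mu_1$, so $X:=w_1^\top h(x)$ concentrates around $\rho\mu_1$. I would then apply the standard multiplicative Chernoff bound for a sum of independent indicators with deviation $\eta=\tfrac12$: the upper tail $\pr(X\geq\tfrac32\rho\mu_1)\leq\exp(-\rho\mu_1/12)$ via the $\exp(-\eta^2\E[X]/3)$ form valid for $\eta\leq1$, and the lower tail $\pr(X\leq\tfrac12\rho\mu_1)\leq\exp(-\rho\mu_1/8)\leq\exp(-\rho\mu_1/12)$. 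Dividing through by $\rho$ yields the two-sided inequality (i), and repeating the argument verbatim for class $0$ with $\mu_0:=\E_M(w_0^\top h(x)/\rho)$ yields (ii).

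Finally I would union-bound over the four tail events. Their total probability is at most $2\exp(-\rho\mu_1/12)+2\exp(-\rho\mu_0/12)\leq 4\exp(-\rho\mu/12)$ with $\mu=\min\{\mu_0,\mu_1\}$; requiring this to be at most $\delta$ is exactly $\rho\mu\geq 12\ln(4/\delta)$, the stated hypothesis. The only step demanding genuine care is the i.i.d.\ claim of the first paragraph; once that is justified, the remainder is a mechanical Chernoff-plus-union-bound calculation, and the matching of the constant $12$ to the chosen tail form is the reason the hypothesis takes its precise form.
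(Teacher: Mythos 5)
Your proposal is correct and follows essentially the same route as the paper's proof: the indicators $Z_j$ you define coincide exactly with the paper's $U_j$, the mean computation via the conditional probability (the paper cites Lemma 2 of \cite{dasgupta2018neural}, which underlies Lemma~\ref{alem:expectation}(i)) is the same, and the multiplicative Chernoff bound with $\epsilon=1/2$ plus a union bound yields the identical condition $\rho\mu\geq 12\ln(4/\delta)$. Your explicit flagging of why independence of the $Z_j$ requires the per-row threshold formulation of winner-take-all (rather than the exact top-$\rho$ version) is a point the paper leaves implicit, but it does not change the argument.
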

\begin{proof}
We will only prove part (i) since part (ii) is similar. Let $h(x), h(x_1),\ldots,h(x_{n_1})$ be the projected-and-thresholded versions of $x,x_1,\ldots, x_{n_1}$ respectively. Define random variables $U_1,\ldots, U_m\in \{0,1\}$ as follows:
\[
    U_j=
\begin{cases}
    1,& \text{if } h(x_1)_j=\cdots=h(x_{n_1})_j=0 \text{ and } h(x)_j=1\\
    0,              & \text{otherwise}
\end{cases}
\]
The $U_j$ are i.i.d. and
\begin{eqnarray*}
\E_M(U_j)\hspace{-0.12in}&=&\hspace{-0.12in}\pr_M(h(x)_j=1)\times\\
&&\hspace{-0.12in}\pr_M\left(h(x_1)_j=\cdots=h(x_{n_1})_j=0 ~ | ~ h(x)_j=1\right)\\
&=&\hspace{-0.12in}\frac{\rho}{m}\E_M\left(\frac{w_1^{\top} h(x)}{\rho}\right)
\end{eqnarray*}
where we have used the fact that $\pr_M(h(x)_j=1)=\pr_{\theta\sim Q}(\theta^{\top} x\geq \tau_x(\rho/m))=\rho/m$ and using Lemma 2 of the supplementary material of~\cite{dasgupta2018neural}, $\pr_M\left(h(x_1)_j=\cdots=h(x_{n_1})_j=0 ~ | ~ h(x)_j=1\right)=\E_M\left(\frac{w_1^{\top} h(x)}{\rho}\right)$. Therefore, $\E_M(U_1+\cdots+U_m)=\rho\cdot\E_M\left(\frac{w_1^{\top} h(x)}{\rho}\right)$. Let $\mu_1=\E_M\left(\frac{w_1^{\top} h(x)}{\rho}\right)$. By multiplicative Chernoff bound for any $0<\epsilon<1$, we have,
$$
\pr_M\left(U_1+\cdots+U_m\geq (1+\epsilon)\rho\mu_1\right)\leq\exp(-\epsilon^2\rho\mu_1/3)
$$
$$
\pr_M\left(U_1+\cdots+U_m\leq (1-\epsilon)\rho\mu_1\right)\leq\exp(-\epsilon^2\rho\mu_1/2)
$$
Setting $\epsilon=1/2$ and bounding right hand side of each of the above two inequalities by $\delta/4$,  ensures that part (i) holds with probability at least $1-\frac{\delta}{2}$ provided $\rho\cdot\E_M\left(\frac{w_1^{\top} h(x)}{\rho}\right)\geq 12\ln(4/\delta)$.
\end{proof}
\subsection{Result for multi-class classification}\label{sec:sm-multi-class}

Theorem \ref{ath:binary_new} can be easily extended to multi-class classification problem involving $L$ classes in a straight forward manner by applying concentration result to each of the $\left(\frac{w_i^{\top}h(x)}{\rho}\right)$, for $i\in [L]$, and using a union bound.

\begin{cor}\label{cor:multi}
Given a training set $S=\{(x_i,y_i)\}_{i=1}^{\sum_{j=1}^{L-1}n_j}\subset \mathcal{X}_b \times \mathcal{Y}\subset \{0,1\}^d\times \{0,1,\ldots,L-1\}$ of size $\sum_{i=0}^{L-1}n_i$, let $S=\cup_{i=0}^{L-1} S^i$, where $S^{i}$ is the subset of $S$ with label $i$ satisfying $|S^i|=n_i$ and $n=\max\{n_0,\ldots,n_{L-1}\}$. For any test example $x\in\mathcal{X}_b$, let its closest point from $S$ measured using $\ell_1$ metric be $x_{NN}$ having label $y_{NN}\in\{0,\ldots,L-1\}$. Fix any $\delta\in(0,1)$ and set $\rho\geq\frac{12}{\mu}\ln(2L/\delta)$, 
 $m\geq (d/b)n\rho$, and $s=\log_{d/b}(m/\rho)$, where  $\mu=\min\left\{\E_M\left(\frac{w_0^{\top} h(x)}{\rho}\right),\ldots,\E_M\left(\frac{w_{L-1}^{\top} h(x)}{\rho}\right)\right\}$ and $h(x)$ is the \fh function from equation~\ref{aeq:flyhash}. Assume that for all $(x_i,y_i)\in S$, with $y_i\neq y_{NN}, \|x-x_i\|_1\geq 2b(1-b/d)$ and $x_{NN}$ satisfies  $\|x-x_{NN}\|_1\leq\frac{2b(1-b/d)}{3s}$. Let $w_{0},\ldots, w_{L-1}\in\{0,1\}^m$ be the {\fbf}s constructed using $S^{0},\ldots, S^{L-1}$ respectively. Then, with probability at least $1-\delta$ (over the random choice of projection matrix $M$),  prediction of \fbfc on $x$ agrees with the prediction of 1-NN classifier on $x$.
\end{cor}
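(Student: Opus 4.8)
The plan is to reduce the multi-class statement to the two-class/binary-data argument underlying Theorem~\ref{th:binary_new}, applied class by class, and then to glue the $L$ per-class concentration statements with a union bound. Assume without loss of generality that $y_{NN}=c^\ast$ for some $c^\ast\in\{0,\ldots,L-1\}$, so that \onennc predicts $c^\ast$; since \fbfc predicts $\arg\min_i w_i^\top h(x)$, it suffices to show that with probability at least $1-\delta$ we have $w_{c^\ast}^\top h(x) < w_j^\top h(x)$ simultaneously for every $j\neq c^\ast$. The closed form for the collision probability carries over verbatim from the binary case: because every $x\in\mathcal{X}_b$ has exactly $b$ ones and each row $\theta$ has exactly $s$ ones, the threshold satisfies $\tau_x(\rho/m)=s$ uniformly over $\mathcal{X}_b$, giving $\pr_{\theta\sim Q}(\theta^\top x=s)=\binom{b}{s}/\binom{d}{s}\approx (b/d)^s=\rho/m$, hence $s=\log_{d/b}(m/\rho)$, together with $q(x,x')=(x^\top x'/b)^s$ and $\|x-x'\|_1=2(b-x^\top x')$. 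None of this depends on the number of classes.

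Next I would establish the separation in expectation. Writing $\|x-x_{NN}\|_1=2b\epsilon$ and noting that the global nearest neighbor lies in class $c^\ast$ (so it is also the within-class-$c^\ast$ nearest neighbor), part (iv) of Lemma~\ref{lem:expectation} applied to $w_{c^\ast}$ gives $\E_M(w_{c^\ast}^\top h(x)/\rho)\leq 1-q(x,x_{NN})=1-(1-\epsilon)^s\leq s\epsilon$. For any other class $j\neq c^\ast$, every training point $x'$ with label $j$ has $y'\neq y_{NN}$, so hypothesis (ii) forces $x^\top x'\leq b^2/d$ and thus $q(x,x')\leq (b/d)^s=\rho/m$; summing over the at most $n$ such points and applying part (v) of Lemma~\ref{lem:expectation} (which holds for each class by symmetry) with $m\geq (d/b)n\rho$ yields $\E_M(w_j^\top h(x)/\rho)\geq 1-n\rho/m\geq 1-b/d$, uniformly over $j\neq c^\ast$. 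Hence in expectation the scores are separated whenever $s\epsilon < 1-b/d$.

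Finally I would lift this to high probability. The multiplicative Chernoff argument of Lemma~\ref{lem:concentration} extends directly to each class $i$: the indicator sum defining $w_i^\top h(x)$ is a sum of i.i.d.\ (over the $m$ rows) $\{0,1\}$ variables with mean $(\rho/m)\E_M(w_i^\top h(x)/\rho)$, so with $\epsilon=1/2$ each of its two tails is bounded by $\exp(-\rho\mu/12)$. Taking a union bound over the $2L$ tail events and forcing each below $\delta/(2L)$ gives precisely the stated requirement $\rho\geq (12/\mu)\ln(2L/\delta)$, after which, with probability at least $1-\delta$, $w_{c^\ast}^\top h(x)/\rho\leq \tfrac{3}{2}s\epsilon$ and $w_j^\top h(x)/\rho\geq \tfrac{1}{2}(1-b/d)$ for all $j$. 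The factor-$3$ slack in hypothesis (i), $\|x-x_{NN}\|_1=2b\epsilon\leq 2b(1-b/d)/(3s)$, is exactly what guarantees $\tfrac{3}{2}s\epsilon<\tfrac{1}{2}(1-b/d)$, so the realized scores are correctly ordered and \fbfc predicts $c^\ast=y_{NN}$.

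I expect the only genuinely new work beyond Theorem~\ref{th:binary_new} to be the union-bound bookkeeping: verifying that the lower bound $\E_M(w_j^\top h(x)/\rho)\geq 1-b/d$ holds \emph{uniformly} over all $L-1$ competing classes (this is where hypothesis (ii) must apply to every point of every wrong class, and where $n=\max_i n_i$ enters the bound on $m$), and tracking the constant so that spreading the failure probability over $2L$ tails produces the $\ln(2L/\delta)$ dependence rather than the $\ln(4/\delta)$ of the two-class lemma. Everything else is inherited unchanged from the binary analysis.
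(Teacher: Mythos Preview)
Your proposal is correct and follows exactly the approach the paper sketches: the paper's entire argument for this corollary is the single sentence that Theorem~\ref{th:binary_new} extends ``in a straight forward manner by applying concentration result to each of the $\left(\frac{w_i^{\top}h(x)}{\rho}\right)$, for $i\in [L]$, and using a union bound.'' You have filled in precisely those details --- the uniform expectation bounds for the $L-1$ wrong classes via Lemma~\ref{lem:expectation}(v), the per-class Chernoff tails, and the $2L$-event union bound yielding the $\ln(2L/\delta)$ factor --- and in fact provide more detail than the paper itself does.
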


\subsection{Proof of Theorem 4}

We analyze classification performance of \fbfc trained on a training set $S=\{(x_i,y_i)\}_{i=1}^{n_0+n_1}\subset \mathcal{X} \times  \{0,1\}$, where  $S=S^{1}\cup S^{0}$, $S^{0}\subset S$ with label 0 and $S^{1}\subset S$ with label 1, satisfying $|S^{0}|=n_0$ and $|S^{1}|=n_1$ and $n=\max\{n_0,n_1\}$. For appropriate choice of $m$, let $w_{0}, w_{1}\in\{0,1\}^m$ be the $\fbf$s constructed using $S^{0}$ and $S_1$ respectively. In Theorem 3, we consider a special case where we make permutation invariant distribution assumption. Permutation invariant distribution in the \fbf context was first introduced in \cite{dasgupta2018neural} and is defined as follows: a distribution $P$ over $\Real^d$ is permutation invariant if for any permutation $\sigma$ of $\{1,2,\ldots,d\}$ and any $x=(x_1,\ldots,x_d)\in\Real^d$, $P(x_1,\ldots,x_d)=P(x_{\sigma(1)},\ldots,x_{\sigma(d)})$ . Restating Theorem 4 for completeness.

\begin{theorem}\label{ath:real_new}

Let $S$ be a training set as given above. Fix any $\delta\in(0,1)$, $s\ll d$, and set $\rho\geq\frac{48}{\mu}\ln(8/\delta)$ and $m\geq 14n\rho/\delta$, where $\mu=\min\left\{\E_M\left(\frac{w_0^{\top} h(x)}{\rho}\right),\E_M\left(\frac{w_1^{\top} h(x)}{\rho}\right)\right\}$, $h(x)$ is the \fh function from equation~\ref{aeq:flyhash}, and  $w_{0}, w_{1}\in\{0,1\}^m$ are the {\fbf}s constructed using $S^{0}$ and $S^{1}$ respectively. For any test example $x\in\Real^d$, sampled from a permutation invariant distribution, let $x_{NN}$ be its nearest neighbor from $S$ measured using $\ell_{\infty}$ metric, which  satisfies $\|x-x_{NN}\|_{\infty}\leq \Delta/s$, where $\Delta=\frac{1}{2}\left(\tau_x(2\rho/m)-\tau_x(\rho/m)\right)$ and has label $y_{NN}\in\{0,1\}$. Then, with probability at least $1-\delta$ (over the random choice of projection matrix $M$),  prediction of \fbfc on $x$ agrees with the prediction of 1-NN classifier on $x$.
\end{theorem}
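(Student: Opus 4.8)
The plan is to reuse the two-stage template of the binary case (Theorem~\ref{ath:binary_new}): first show that \fbfc agrees with \onennc \emph{in expectation} over the random lifting matrix $M$ (an $\Real^d$ analog of Lemma~\ref{lem:binary_new}), and then apply the multiplicative Chernoff concentration of Lemma~\ref{lem:concentration} to upgrade this to a high-probability guarantee. Without loss of generality take $y_{NN}=1$, so that \onennc outputs label $1$ and it suffices to certify $w_1^\top h(x)/\rho<w_0^\top h(x)/\rho$. The expectation stage rests on parts (iv) and (v) of Lemma~\ref{alem:expectation}, namely $\E_M(w_1^\top h(x)/\rho)\le 1-q(x,x_{NN})$ and $\E_M(w_0^\top h(x)/\rho)\ge 1-\sum_{x'\in S^0}q(x,x')$; the whole proof reduces to lower bounding the neighbor collision probability $q(x,x_{NN})$ and upper bounding the opposite-class aggregate $\sum_{x'\in S^0}q(x,x')$.

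For the neighbor term I would exploit the definition $\Delta=\tfrac12(\tau_x(\rho/m)-\tau_x(2\rho/m))$. Since every row $\theta$ has exactly $s$ ones, the hypothesis $\|x-x_{NN}\|_\infty\le\Delta/s$ forces $|\theta^\top x-\theta^\top x_{NN}|\le s\|x-x_{NN}\|_\infty\le\Delta$ for all $\theta$, so conditioning on $\theta^\top x\ge\tau_x(\rho/m)$ preserves $\theta^\top x_{NN}\ge\tau_x(\rho/m)-\Delta$. The spacing $\Delta$ between the $\rho/m$- and $2\rho/m$-fractiles is chosen exactly so that this shifted value sits above the level at which $\theta^\top x_{NN}$ would leave the top $\rho/m$-fractile, yielding a constant lower bound $q(x,x_{NN})\ge c_0$. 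This is the $\Real^d$ surrogate for the closed form $q(x,x_{NN})=(1-\epsilon)^s\ge 1-s\epsilon$ that drove the binary argument.

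For the aggregate term I would invoke permutation invariance of the sampling distribution $P$. Because $P$ is invariant under coordinate permutations and $Q$ is uniform over $s$-subsets, the baseline collision probability averages to $\E_x\,q(x,x')=\rho/m$ for every fixed $x'\in S$, so $\E_x\sum_{x'\in S^0}q(x,x')\le n\rho/m$. A Markov inequality with the choice $m\ge 14n\rho/\delta$ then makes $\sum_{x'\in S^0}q(x,x')$ a small constant with probability at least $1-\delta/2$ over the draw of $x$. Combining the two bounds gives a constant multiplicative gap $3\,\E_M(w_1^\top h(x)/\rho)<\E_M(w_0^\top h(x)/\rho)$ in expectation. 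Finally, Lemma~\ref{lem:concentration} with $\rho\ge\tfrac{48}{\mu}\ln(8/\delta)$ confines each score to within the factors $[\tfrac12,\tfrac32]$ of its mean with probability $1-\delta/2$ over $M$; a union bound over the $x$-event and the $M$-event then yields $w_1^\top h(x)/\rho<w_0^\top h(x)/\rho$ with probability at least $1-\delta$, which is the claim.

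The step I expect to be the main obstacle is the constant lower bound on $q(x,x_{NN})$. In contrast to the binary case, the per-point thresholds $\tau_x$ and $\tau_{x_{NN}}$ now differ and $q$ has no closed form; a purely pointwise comparison only gives $\tau_{x_{NN}}(\rho/m)\le\tau_x(\rho/m)+\Delta$, which is too weak to certify that the shifted value $\theta^\top x_{NN}\ge\tau_x(\rho/m)-\Delta$ clears the neighbor's own top-fractile threshold. Bridging this gap is precisely where the $2\rho/m$-fractile in the definition of $\Delta$ must be paired with the permutation-invariant control on the distribution of $\theta^\top x_{NN}$ just below its threshold; extracting a clean constant from this anti-concentration estimate is the technical crux of the proof.
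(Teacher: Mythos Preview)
Your overall architecture matches the paper's proof exactly: assume $y_{NN}=1$, bound $\E_M(w_1^\top h(x)/\rho)$ above via Lemma~\ref{alem:expectation}(iv) and $q(x,x_{NN})$, bound $\E_M(w_0^\top h(x)/\rho)$ below via Lemma~\ref{alem:expectation}(v), permutation invariance ($\E_x q(x,x')=\rho/m$) and Markov, then concentrate with Lemma~\ref{lem:concentration} and union bound. Two points need correcting, however.

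\textbf{The constants in your last step do not close.} The paper obtains $q(x,x_{NN})\ge 1/2$ (citing Lemma~9 of \cite{dasgupta2018neural}), so $\E_M(w_1^\top h(x)/\rho)\le 1/2$, while $\E_M(w_0^\top h(x)/\rho)\ge 1-\alpha$ on the Markov event. Since $1-\alpha<1$, a factor-$3$ multiplicative gap is impossible; the best you can hope for is a factor below $2$. Consequently your concentration window $[\tfrac12,\tfrac32]$ is too loose: it would require $\tfrac32\cdot\tfrac12<\tfrac12(1-\alpha)$, i.e.\ $\alpha<-\tfrac12$. The paper instead runs Lemma~\ref{lem:concentration} with $\epsilon=\tfrac14$, giving the window $[\tfrac34,\tfrac54]$; this is precisely why $\rho\ge\tfrac{48}{\mu}\ln(8/\delta)$ (note $3/\epsilon^2=48$). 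The arithmetic then becomes $\tfrac54\cdot\tfrac12<\tfrac34(1-\alpha)$, i.e.\ $\alpha<\tfrac16$, and the choice $\alpha=\tfrac17$ yields $m\ge 14n\rho/\delta$.

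\textbf{The $q(x,x_{NN})$ lower bound does not need permutation invariance.} Your final paragraph suggests that bridging the threshold mismatch between $\tau_x$ and $\tau_{x_{NN}}$ requires the permutation-invariant anti-concentration control. In the paper this step is handled by Lemma~9 of \cite{dasgupta2018neural}, which uses only the pointwise inequality $|\theta^\top x-\theta^\top x_{NN}|\le s\|x-x_{NN}\|_\infty\le\Delta$ together with the $2\rho/m$-fractile spacing built into $\Delta$; permutation invariance enters the proof solely through Corollary~11 of \cite{dasgupta2018neural} to get $\E_x q(x,x')=\rho/m$ for the opposite-class aggregate. So the ``technical crux'' you flag is already dispatched by an existing deterministic lemma, and you should simply cite it rather than try to re-derive it via distributional assumptions.
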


\begin{proof}
Without loss of generality, assume that $y_{NN}=1$. For the case when $y_{NN}=0$, is similar. Prediction of \fbfc on $x$ agrees with the prediction of 1-NN classifier whenever $\left(w_1^{\top}h(x)/\rho\right)<\left(w_0^{\top}h(x)/\rho\right)$. We first show that $\E_M\left(w_1^{\top}h(x)/\rho\right)<\E_M\left(w_0^{\top}h(x)/\rho\right)$ with high probability and then using standard concentration bound presented in lemma \ref{lem:concentration}, we achieve the desired result. Since $\|x-x_{NN}\|_{\infty}\leq \Delta/s$, using lemma 9 of \cite{dasgupta2018neural}, we get $q(x,x_{NN})\geq 1/2$. Combining this with part (iv) of lemma \ref{alem:expectation}, we get $\E_M\left(w_1^{\top}h(x)/\rho\right)\leq 1/2$. Next, since $x$ is sampled from a permutation invariant distribution, using corollary 11 of \cite{dasgupta2018neural}, we get $\E_x q(x,x_i)=\rho/m$ for each $x'\in S^0$, and thus using linearity of expectation, $\E_x\left(\sum_{x'\in S^0}q(x,x')\right)=\sum_{x'\in S^0}\E_x q(x,x')=\rho n_0/m$. For any $\alpha>0$,  using Markov's inequality, 
$$\pr\left(\sum_{x'\in S^0}q(x,x')>\alpha\right)\leq\frac{\E_x\left(\sum_{x'\in S^0}q(x,x'\right)}{\alpha}=\frac{\rho n_0}{m\alpha}\leq \frac{\delta}{2}.$$
Therefore, $\sum_{x'\in S^0}q(x,x')\leq \alpha$ with probability at least $1-\delta/2$ for $m\geq\frac{2\rho n_0}{\alpha\delta}$. Combining this with part (v) of lemma \ref{alem:expectation}, we immediately get, $\E_M\left(w_0^{\top}h(x)/\rho\right)\geq 1-\alpha$ with probability at least $1-\delta/2$. It is easy to see that for $\alpha<1/2$, $\E_M\left(w_1^{\top}h(x)/\rho\right)<\E_M\left(w_0^{\top}h(x)/\rho\right)$ with  probability at least $1-\delta/2$, and thus in expectation, prediction of \fbfc on $x$ agrees with the prediction of 1-NN classifier on $x$. Using concentration bound and a smaller $\alpha$, we next show that $\left(w_1^{\top}h(x)/\rho\right)<\left(w_0^{\top}h(x)/\rho\right)$ with probability at least $1-\delta$. In particular, using $\epsilon=1/4$ and $\delta=\delta/2$ in lemma \ref{lem:concentration}, we see that with probability at least $1-\delta/2$ the following holds: (i) $\frac{3}{4}\E_M\left(w_1^{\top}h(x)/\rho\right)\leq w_1^{\top}h(x)/\rho\leq \frac{5}{4}\E_M\left(w_1^{\top}h(x)/\rho\right)$, and (ii) $\frac{3}{4}\E_M\left(w_0^{\top}h(x)/\rho\right)\leq w_0^{\top}h(x)/\rho\leq \frac{5}{4}\E_M\left(w_0^{\top}h(x)/\rho\right)$ provided  $\rho\cdot\min\left\{\E_M\left(w_0^{\top}h(x)/\rho\right),\E_M\left(w_1^{\top}h(x)/\rho\right)\right\}\geq 48\ln(8/\delta)$. Combining this with the bounds on the expected values of the novelty scores, it is easy to see that with probability $1-\delta$, $w_1^{\top}h(x)/\rho < w_0^{\top}h(x)/\rho$ whenever, $\frac{1}{2}\cdot\frac{5}{4} < (1-\alpha)\cdot\frac{3}{4}\Rightarrow \alpha <1/6$. Since $n=\max\{n_0,n_1\}\geq n_0$, setting $\alpha=1/7$, which in turn requires $m\geq 14n\rho/\delta$, the result follows.
\end{proof}
The above result can be extended to multi-class classification problem in a straight forward manner.

\section{Supplementary material from Section 5}
\label{asec:expts}

\paragraph{Implementation \& Compute Resource:} The proposed novel classification scheme is implemented in Python 3.6 to fit the \sklearn API~\citep{pedregosa2011scikit}, but the current implementation is not optimized for computational performance. We use the \sklearn implementation of various baselines we consider in our evaluations. To generate synthetic data sets, we use the \texttt{data.make\_classification} functionality in \sklearn~\citep{guyon2003design}. The experiments are performed on a 16-core 128GB machine running Ubuntu 18.04.

\begin{table}[!htb]
  \caption{Details of a subset of the data sets. For CIFAR-10 and CIFAR-100, we collapse the 3 color channels and then flatten the $32 \times 32$ images to points in $\Real^{1024}$. For MNIST and Fashion-MNIST, we flatten the $28 \times 28$ images to points in $\Real^{784}$.}
  \label{tab:data-stats}
  \vskip 0.15in
  \begin{center}
    \begin{small}
      \begin{sc}
        \begin{tabular}{lcccc}
          \toprule
          Data set & $n$ & $d$ & $L$ & Experiment\\
          \midrule
          Digits        & $1797$  & $64$   & $10$  & OpenML \\
          Letters       & $20000$ & $16$   & $26$  & OpenML \\
          Segment       & $2310$  & $19$   & $7$   & OpenML \\
          Gina Prior 2  & $3468$  & $784$  & $10$  & OpenML \\
          USPS          & $9294$  & $256$  & $10$  & OpenML \\
          Madeline      & $3140$  & $259$  & $2$   & OpenML \\
          \midrule
          MNIST         & $60000$ & $784$  & $10$  & Vision \\
          Fashion-MNIST & $60000$ & $784$  & $10$  & Vision \\
          CIFAR-10      & $50000$ & $1024$ & $10$  & Vision \\
          CIFAR-100     & $50000$ & $1024$ & $100$ & Vision \\
          \bottomrule
        \end{tabular}
      \end{sc}
    \end{small}
  \end{center}
  \vskip -0.1in
\end{table}

\subsection{Dependence on \fbfc hyper-parameters} \label{asec:expts:hpdep}

We study the effect of the different hyper-parameters of \fbfc -- (i) the dimensionality of the \fh $m$, (ii) the per-row density $s$ of the sparse binary projection matrix $M_m^s$, (iii) the NNZ $\rho$ in the \fh after the winner-take-all operation, and (iv) the decay rate $c$ of the \fbf. For this analysis, we consider $6$ data sets from OpenML -- Digits, Letters, Segment, Gina Prior 2, USPS and Madeline (see Table~\ref{tab:data-stats} for data sizes). For every hyper-parameter setting, we compute the $10$-fold cross-validated accuracy. We vary each hyper-parameter while fixing the others. The results for each of the hyper-parameters and data sets are presented in Figure~\ref{afig:hpdep3-1}~\&~\ref{afig:hpdep3-2}. We evaluate the following configurations for the evaluation of each of the hyper-parameters:
\begin{itemize}
    \item {\bf \fh dimension $m$:} We try $10$ values for $m \in [4d, 4096d]$ with $(s/d) \in \{ 0.1, 0.3 \}$, $\rho \in \{ 8, 32 \}$, $c \in \{0.5, 1\}$.
    \item {\bf Projection density $s/d$:} We try $10$ values for $(s/d) \in [0.1, 0.8]$ with $m \in \{ 256, 1024 \}$, $\rho \in \{ 8, 32 \}$, $c \in \{ 0.5, 1 \}$.
    \item {\bf \fh NNZ $\rho$:} We try $10$ values for $\rho \in [4, 256]$ with $m \in \{ 256, 1024 \}$, $(s/d) \in \{ 0.1, 0.3 \}$, $c \in \{ 0.5, 1 \}$.
    \item {\bf \fbf decay rate $c$:} We try $10$ values for $c \in [0.2, 0.9] $ and $c = 1$ with $m \in \{ 256, 1024 \}$, $(s/d) \in \{ 0.1, 0.3 \}$, $\rho \in \{ 8, 32 \}$.
\end{itemize}
\begin{figure*}[thb]
  \centering
    \begin{subfigure}{0.235\textwidth}
      \centering
      \includegraphics[width=\textwidth]{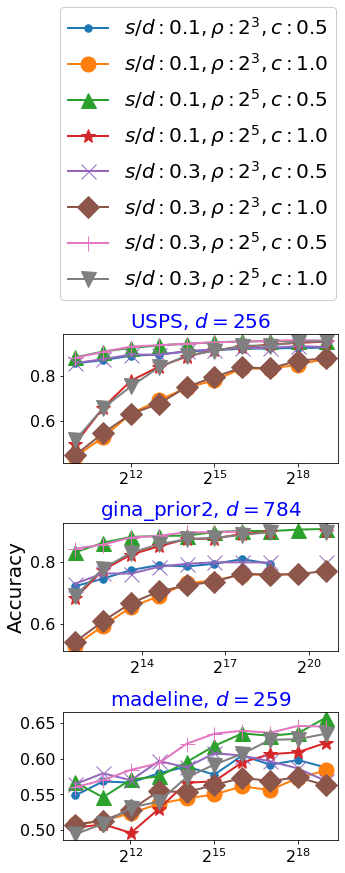}
      \caption{\fh dimension $m$}
      \label{figs:hpdep3-ef-1}
    \end{subfigure}
    ~
    \begin{subfigure}{0.235\textwidth}
      \centering
      \includegraphics[width=\textwidth]{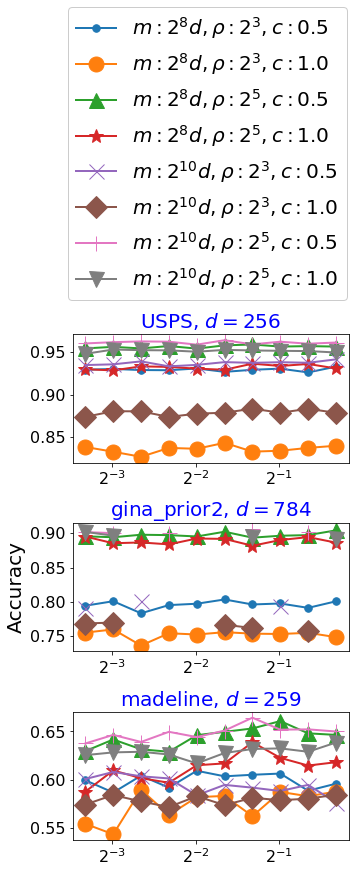}
      \caption{Projection density $s/d$}
      \label{figs:hpdep3-cs-1}
    \end{subfigure}
    ~
    \begin{subfigure}{0.235\textwidth}
      \centering
      \includegraphics[width=\textwidth]{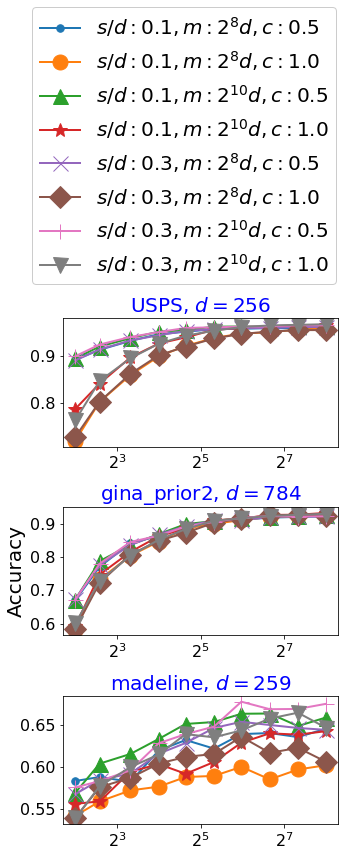}
      \caption{\fh NNZ $\rho$}
      \label{figs:hpdep3-wn-1}
    \end{subfigure}
    ~
    \begin{subfigure}{0.235\textwidth}
      \centering
      \includegraphics[width=\textwidth]{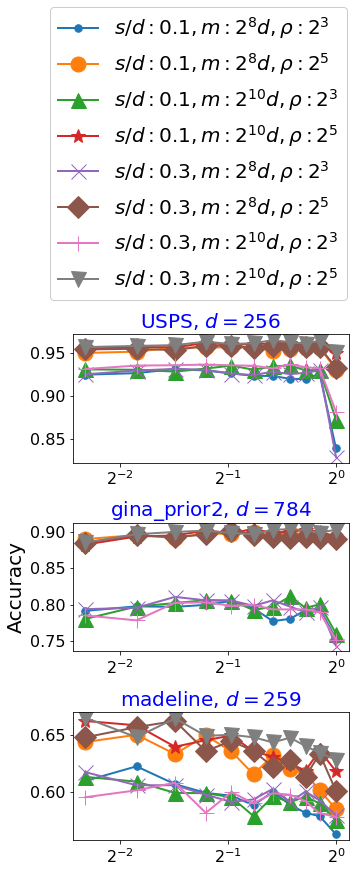}
      \caption{\fbf decay rate $c$}
      \label{figs:hpdep3-c-1}
    \end{subfigure}
    \caption{{\bf \fbfc hyper-parameter dependence -- Part I.} Effect of the different \fbfc hyper-parameters $m$, $s$, $\rho$, $c$ on \fbfc predictive performance for $3$ data sets -- the horizontal axes correspond to the hyper-parameter being varied while fixing the remaining hyper-parameters. The vertical axes correspond to the 10-fold cross-validated accuracy for the given hyper-parameter configuration ({\em higher is better}). Note the log scale on the horizontal axes. For the hyper-parameter $c$, $c=1$ corresponds to the binary \fbfc. {\em Please view in color.} }
    \label{afig:hpdep3-1}
\end{figure*}
\begin{figure*}[thb]
  \centering
    \begin{subfigure}{0.235\textwidth}
      \centering
      \includegraphics[width=\textwidth]{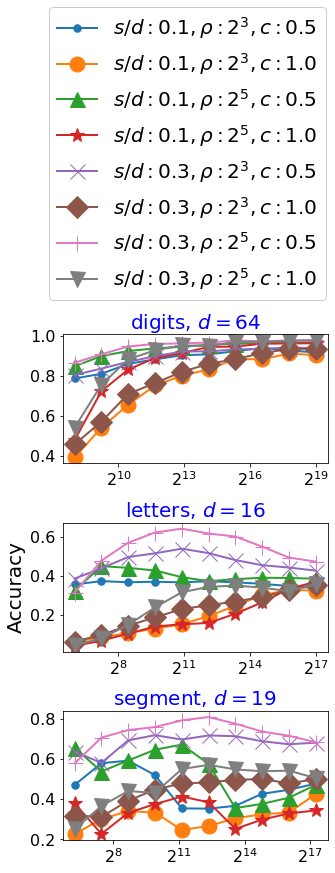}
      \caption{\fh dimension $m$}
      \label{figs:hpdep3-ef-2}
    \end{subfigure}
    ~
    \begin{subfigure}{0.235\textwidth}
      \centering
      \includegraphics[width=\textwidth]{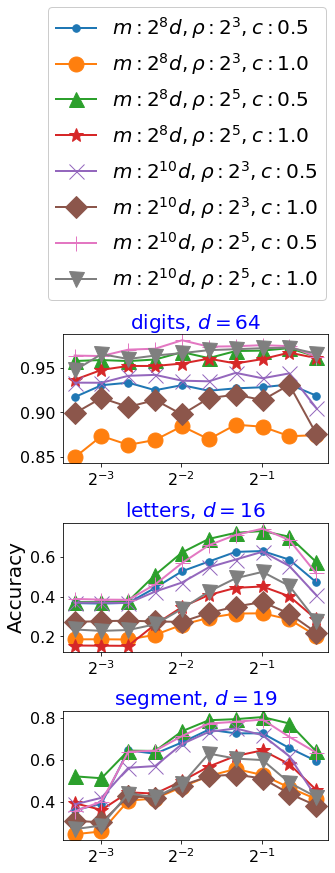}
      \caption{Projection density $s/d$}
      \label{figs:hpdep3-cs-2}
    \end{subfigure}
    ~
    \begin{subfigure}{0.235\textwidth}
      \centering
      \includegraphics[width=\textwidth]{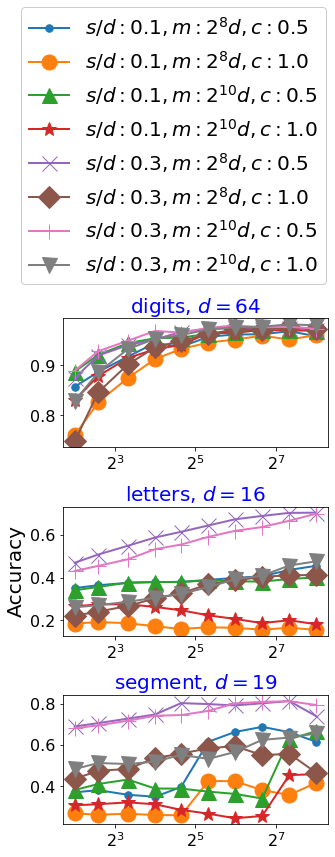}
      \caption{\fh NNZ $\rho$}
      \label{figs:hpdep3-wn-2}
    \end{subfigure}
    ~
    \begin{subfigure}{0.235\textwidth}
      \centering
      \includegraphics[width=\textwidth]{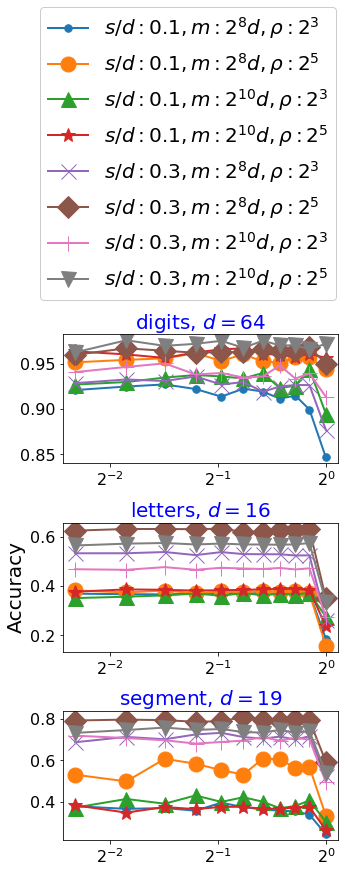}
      \caption{\fbf decay rate $c$}
      \label{figs:hpdep3-c-2}
    \end{subfigure}
    \caption{{\bf \fbfc hyper-parameter dependence -- Part II.} Effect of the different \fbfc hyper-parameters $m$, $s$, $\rho$, $c$ on \fbfc predictive performance for $3$ data sets -- the horizontal axes correspond to the hyper-parameter being varied while fixing the remaining hyper-parameters. The vertical axes correspond to the 10-fold cross-validated accuracy for the given hyper-parameter configuration ({\em higher is better}). Note the log scale on the horizontal axes. For the hyper-parameter $c$, $c=1$ corresponds to the binary \fbfc. {\em Please view in color.} }
    \label{afig:hpdep3-2}
\end{figure*}
The results in Figures~\ref{figs:hpdep3-ef-1}~\&~\ref{figs:hpdep3-ef-2} indicate that, for fixed $\rho$ increasing $m$ improves the \fbfc accuracy, aligning with the theoretical guarantees, up until an upper bound. This behavior is clear for high dimensional data sets. This behavior is a bit more erratic for the lower dimensional sets. Larger values of $m$ improve performance, since it allows us to capture each class' distribution with smaller random overlap between each class' {\fbf}s. But the theoretical guarantees also indicate that $\rho$ needs to be large enough, and if $m$ grows too large for any given $k$, the \fbfc accuracy might not improve any further.

Figures~\ref{figs:hpdep3-cs-1}~\&~\ref{figs:hpdep3-cs-2} indicate that for lower dimensional data (such as $d \leq 20$), increasing the projection density $s$ improves performance up to a point (around $s = 0.5$), after which the performance starts degrading. This is probably because for smaller values of $s$, not enough information is captured by the sparse projection for small $d$; for large values of $s$, each row in the projection matrix $M_m^s$ become similar to each other, hurting the similarity-preserving property of \fh. For higher dimensional data sets, the \fbfc performance appears to be somewhat agnostic to $s$ for any fixed $m$, $\rho$ and $c$.

Figures~\ref{figs:hpdep3-wn-1}~\&~\ref{figs:hpdep3-wn-2} indicate that increase in $\rho$ leads to improvement in \fbfc performance since large values of $\rho$ better preserve pairwise similarities. However, if $\rho$ is too large relative to $m$, the sparsity of the subsequent per-class \fbf go down, thereby leading to more overlap in the per-class {\fbf}s. So $\rho$ needs to large as per the theoretical analysis, but not too large.

Figures~\ref{figs:hpdep3-c-1}~\&~\ref{figs:hpdep3-c-2} indicate that the \fbfc is somewhat agnostic to the \fbf decay rate $c$ for any value strictly less that $1$ (corresponding to the binary \fbf). But there is a significant drop in the \fbfc performance from $c < 1$ to $c = 1$ across all data set -- this behavior is fairly consistent and apparent.
\subsection{Details on baselines} \label{asec:expts:baselines}

Here we detail all the baselines considered in our empirical evaluations and their respective hyper-parameter and the subsequent hyper-parameter optimization.

\begin{enumerate}[noitemsep]
  \item {\bf \knnc.} We consider the \knnc as the primary baseline to match where we tune over the size of the neighborhood in the range $[1,64]$ to maximize the $10$-fold cross-validated accuracy for each data set (synthetic or real).
\item {\bf \ccone.} Classification based on a single prototype per class, where the prototype of a class is the geometric center of the class, which can be computed with a single pass of the data.
\item {\bf \sbfc.} Classification via a variation of \fbfc where we utilize \sh/SRP~\citep{charikar2002similarity} instead of \fh to give us the \sh Bloom Filter classifier (\sbfc). We consider this baseline to demonstrate the need of the highly sparse hashes generated by \fh -- the hashes from \sh are not explicitly designed to be sparse. The dimensionality of the \sh $m$ is the hyper-parameter we search over --  we consider both projecting down in the range $m \in [1, d]$ (the traditional use) and projecting up $m \in [d, 2048d]$, where $d$ is the data dimensionality. Note that for the same projected dimension $m$, \sh is more expensive that \fh since \sh involves a dense matrix-vector multiplication instead of the sparse matrix-vector in \fh.
\item {\bf \lr.} We consider logistic regression trained for a single epoch with a stochastic algorithm. We utilize the \sklearn implementation (\texttt{linear\_model.LogisticRegression}) and tune over the following hyper-parameters -- (a) penalty type ($\ell_1$/$\ell_2$), (b) regularization $\in \left[ 2^{-10}, 2^{10} \right]$, (c) choice of solver (liblinear~\citep{fan2008liblinear}/SAG~\citep{schmidt2017minimizing}/SAGA~\citep{defazio2014saga}), (d) with/without intercept, (e) one-vs-rest or multinomial for multi-class, (f) with/without class balancing (note that this class balancing operation makes this a two-pass algorithm since we need the first pass to weigh the classes appropriately). We consider a total of 960 hyper-parameter configurations for each experiment.
\item {\bf \mlpc.} We consider a multi-layer perceptron trained for a single epoch with the ``Adam'' stochastic optimization scheme~\citep{kingma2014adam}. We use \texttt{sklearn.neural\_network.MLPClassifier} and tune over the following hyper-parameters -- (a) number of hidden layers $\{1, 2\}$, (b) number of nodes in each hidden layer $\{16, 64, 128\}$, (b) choice of activation function (ReLU/HyperTangent), (d) regularization, (e) batch size $ \in \left[2, 2^8\right]$, (f) initial learning rate $\in \left[10^{-5}, 0.1\right]$ (the rest of the hyper-parameters are left as \sklearn defaults). This leads to a total of 720 hyper-parameters configurations per experiment.
\item {\bf \cc.} We also consider a generalization of \ccone where we classify based on multiple prototypes per class -- a test point is assigned the label of its closest prototype. We generate the prototypes per class by $k$-means clustering (with multiple restarts) and tune over the choice of number of clusters per class in the range $[1, 64]$. This is {\em not a single pass baseline} but we consider this as a baseline since it is a common compression technique for {\knnc}.
\end{enumerate}
\subsection{Additional evaluations on synthetic data} \label{asec:expts:syn}

Here we present the relative performance of \fbfc and $\fbfc^*$ for different data dimensionalities in Figure~\ref{afig:bcomp-syn}.

\begin{figure}[htb]
  \centering
  \begin{subfigure}{0.23\textwidth}
    \includegraphics[width=\textwidth]{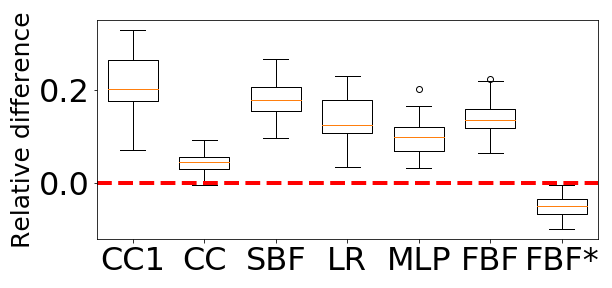}
    \caption{$\HC^{50}, b = 20$}
    \label{afig:bcs-b50}
  \end{subfigure}
  ~
  \begin{subfigure}{0.23\textwidth}
    \includegraphics[width=\textwidth]{figs/syn/{All.Synthetic.binary.d100.S1000}.png}
    \caption{$\HC^{100}, b = 40$}
    \label{afig:bcs-b100}
  \end{subfigure}
  ~
  \begin{subfigure}{0.23\textwidth}
    \includegraphics[width=\textwidth]{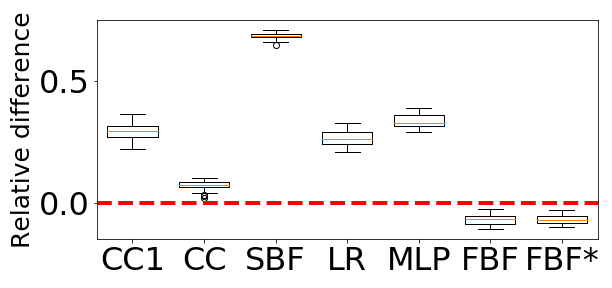}
    \caption{$\Real^{50}$}
    \label{afig:bcs-r30}
  \end{subfigure}
  ~
  \begin{subfigure}{0.23\textwidth}
    \includegraphics[width=\textwidth]{figs/syn/{All.Synthetic.real.d100.S1000}.png}
    \caption{$\Real^{100}$}
    \label{afig:bcs-r100}
  \end{subfigure}
  \caption{Performance of \fbfc/$\fbfc^*$ and baselines relative to
    the \knnc performance on {\em synthetic data}. The $10$-fold
    cross-validated accuracy is considered for each of the data
    sets. The box-plots correspond to the relative difference ({\em
      lower is better}) aggregated over 30 repetitions (see text for
    details).  The red dashed line corresponds to matching \knnc
    performance.}
  \label{afig:bcomp-syn}
\end{figure}

We also study the effect of the number of non-zeros $b < d$ in the binary data on the performance of \fbfc/$\fbfc^*$ and baselines (Figure~\ref{afig:bcomp-syn-bin-nnz}). The results indicate that, for fixed data dimensionality $d$, the relative performance of \fbfc (and variants) is not significantly affected by the choice of $b < d$. \cc is also robust to changes in $b$. The performance of \sbfc seems to improve with increasing $b$ while the opposite behavior is seen for \ccone, \lr and \mlpc.
\begin{figure*}[htb]
  \vskip 0.2in
  \begin{center}
    \centerline{\includegraphics[width=\textwidth]{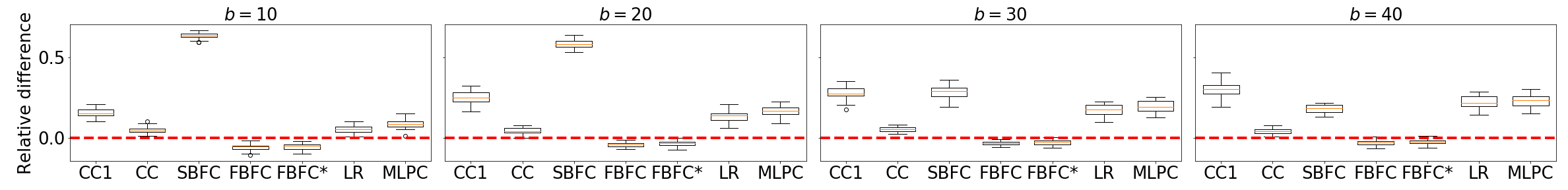}}
    \caption{Performance of \fbfc/$\fbfc^*$ and baselines relative to the performance of \knnc with varying number of non-zeros $b$ per point for fixed $d=100$. All methods undergo a hyper-parameter optimization and the best performance ($10$-fold cross-validation accuracy) is considered for each of the data sets. The boxplots corresponds to the 30 repetitions (in the form of 30 different synthetic data sets per experimental setting). A relative difference of $0$ implies matching the \knnc ({\em lower is better}) -- the red dashed line corresponds to \knnc performance.}
    \label{afig:bcomp-syn-bin-nnz}
  \end{center}
  \vskip -0.2in
\end{figure*}

\subsection{Additional details for OpenML data} \label{asec:expts:openml}

We consider two sets of OpenML data sets utilizing the following query for OpenML classification data sets with no categorical and missing features with (i) \texttt{min\_dim = 11}, \texttt{max\_dim = 101}, \texttt{max\_rows = 50000}, and (ii)  \texttt{min\_dim = 102}, \texttt{max\_dim = 1025}, \texttt{max\_rows = 10000}, leading to $79$ and $14$ data sets respectively where there were no issues with the data retrieval and the processing of the data with \sklearn operators.

\paragraph{OpenML query for data sets.}
\begin{footnotesize}
\begin{lstlisting}[language=Python]
from openml.datasets import list_datasets, get_dataset
openml_df = list_datasets(output_format='dataframe')
val_dsets = openml_df.query(
    'NumberOfInstancesWithMissingValues == 0 & '
    'NumberOfMissingValues == 0 & '
    'NumberOfClasses > 1 & '
    'NumberOfClasses <= 30 & '
    'NumberOfSymbolicFeatures == 1 & '
    'NumberOfInstances > 999 &'
    'NumberOfFeatures >= min_dim &'
    'NumberOfFeatures <= max_dim &'
    'NumberOfInstances <= max_rows'
)[[
    'name', 'did', 'NumberOfClasses',
    'NumberOfInstances', 'NumberOfFeatures'
]]
\end{lstlisting}
\end{footnotesize}


\end{document}